\newcommand{\real}{\mathbb{R}}
\newcommand{\wh}[1]{\widehat{#1}}
\DeclareMathOperator{\softmax}{softmax}
\DeclareMathOperator{\relu}{ReLU}
\DeclareMathOperator{\mean}{E}
\DeclareMathOperator*{\argmin}{argmin}
\DeclareMathOperator*{\plim}{plim}
\theoremstyle{definition} \newtheorem{definition}{Definition}
\theoremstyle{plain}      \newtheorem{assumption}{Assumption}
\theoremstyle{plain}      \newtheorem{theorem}{Theorem}
\theoremstyle{plain}      \newtheorem{lemma}[theorem]{Lemma}
\theoremstyle{plain}      
\title{Stochastic Gradient Descent with \\ Biased but Consistent Gradient Estimators}
\author{Jie Chen $^{1,2}$\,\,, \,
  Ronny Luss $^2$ \\
$^1$ MIT-IBM Watson AI Lab, $^2$ IBM Research \\
\texttt{\{chenjie,rluss\}@us.ibm.com}
}
\begin{document}

\maketitle

\begin{abstract}
  Stochastic gradient descent (SGD), which dates back to the 1950s, is one of the most popular and effective approaches for performing stochastic optimization. Research on SGD resurged recently in machine learning for optimizing convex loss functions and training nonconvex deep neural networks. The theory assumes that one can easily compute an unbiased gradient estimator, which is usually the case due to the sample average nature of empirical risk minimization. There exist, however, many scenarios (e.g., graphs) where an unbiased estimator may be as expensive to compute as the full gradient because training examples are interconnected. Recently, Chen et al. (2018) proposed using a consistent gradient estimator as an economic alternative. Encouraged by empirical success, we show, in a general setting, that consistent estimators result in the same convergence behavior as do unbiased ones. Our analysis covers strongly convex, convex, and nonconvex objectives. We verify the results with illustrative experiments on synthetic and real-world data. This work opens several new research directions, including the development of more efficient SGD updates with consistent estimators and the design of efficient training algorithms for large-scale graphs.
\end{abstract}

\section{Introduction}\label{sec:intro}
Consider the standard setting of supervised learning. There exists a joint probability distribution $P(x,y)$ of data $x$ and associated label $y$ and the task is to train a predictive model, parameterized by $w$, that minimizes the expected loss $\ell$ between the prediction and the ground truth $y$. Let us organize the random variables as $\xi=(x,y)$ and use the notation $\ell(w;\xi)$ for the loss. If $\xi_i=(x_i,y_i)$, $i=1,\ldots,n$, are iid training examples drawn from $P$, then the objective function is either one of the following well-known forms:
\begin{equation}\label{eqn:risk}
\text{expected risk } f(w)=\mean[\ell(w;\xi)]; \quad
\text{empirical risk } f(w)=\frac{1}{n}\sum_{i=1}^n\ell(w;\xi_i).
\end{equation}

Stochastic gradient descent (SGD), which dates back to the seminal work of~\citet{Robbins1951}, has become the de-facto optimization method for solving these problems in machine learning. In SGD, the model parameter is updated until convergence with the rule\footnote{For introductory purpose we omit the projection operator for constrained problems. All analysis in this work covers projection.}
\begin{equation}\label{eqn:SGD}
w_{k+1}=w_k-\gamma_kg_k, \quad k=1,2,\ldots,
\end{equation}
where $\gamma_k$ is a step size and $g_k$ is an unbiased estimator of the gradient $\nabla f(w_k)$. Compared with the full gradient (as is used in deterministic gradient descent), an unbiased estimator involves only one or a few training examples $\xi_i$ and is usually much more efficient to compute.

\subsection{Limitation of Unbiased Gradient and Remedy: Consistent Gradient}
This scenario, however, does not cover all learning settings. A representative example that leads to costly computation of the unbiased gradient estimator $\nabla\ell(w,\xi_i)$ is graph nodes. Informally speaking, a graph node $\xi_i$ needs to aggregate information from its neighbors. If information is aggregated across neighborhoods, $\xi_i$ must request information from its neighbors recursively, which results in inquiring a large portion of the graph. In this case, the sample loss $\ell$ for $\xi_i$ involves not only $\xi_i$, but also all training examples within its multihop neighborhood. The worst case scenario is that computing $\nabla\ell(w,\xi_i)$ costs $O(n)$ (e.g., for a complete graph or small-world graph), as opposed to $O(1)$ in the usual learning setting because only the single example $\xi_i$ is involved.

In a recent work, \citet{Chen2018} proposed a consistent gradient estimator as an economic alternative to an unbiased one for training graph convolutional neural networks, offering substantial evidence of empirical success. A summary of the derivation is presented in Section~\ref{sec:motivation}. The subject of this paper is to provide a thorough analysis of the convergence behavior of SGD when $g_k$ in~\eqref{eqn:SGD} is a consistent estimator of $\nabla f(w_k)$. We show that using this estimator results in the same convergence behavior as does using unbiased ones.

\begin{definition}
An estimator $g^N$ of $h$, where $N$ denotes the sample size, is \emph{consistent} if $g^N$ converges to $h$ in probability: $\plim_{N\to\infty} g^N=h$. That is, for any $\epsilon>0$, $\lim_{N\to\infty} \Pr(\|g^N-h\|>\epsilon)=0$.
\end{definition}

\subsection{Distinctions between Unbiasedness and Consistency}
It is important to note that unbiased and consistent estimators are not subsuming concepts (one does not imply the other), \emph{even in the limit}. This distinction renders the departure of our convergence results, in the form of probabilistic bounds on the error, from the usual SGD results that bound instead the \emph{expectation} of the error.

In what follows, we present examples to illustrate the distinctions between unbiasedness and consistency. To this end, we introduce \emph{asymptotic unbiasedness}, which captures the idea that the bias of an estimator may vanish in the limit.

\begin{definition}
  An estimator $g^N$ of $h$, where $N$ denotes the sample size, is \emph{asymptotically unbiased} if $\mean[g^N]\to h$.
\end{definition}

\paragraph{An estimator can be (asymptotically) unbiased but inconsistent.}
Consider estimating the mean $h=\mu$ of the normal distribution $N(\mu,\sigma^2)$ by using $N$ independent samples $X_1,\ldots,X_N$. The estimator $g^N=X_1$ (i.e., always use $X_1$ regardless of the sample size $N$) is clearly unbiased because $\mean[X_1]=\mu$; but it is inconsistent because the distribution of $X_1$ does not concentrate around $\mu$. Moreover, the estimator is trivially asymptotically unbiased.

\paragraph{An estimator can be consistent but biased.}
Consider estimating the variance $h=\sigma^2$ of the normal distribution $N(\mu,\sigma^2)$ by using $N$ independent samples $X_1,\ldots,X_N$. The estimator $g^N=\sum_{i=1}^N(X_i-\overline{X})^2/N$, where $\overline{X}=\sum_{i=1}^NX_i/N$, has mean $\sigma^2(N-1)/N$ and variance $2\sigma^4(N-1)/N^2$. Hence, it is consistent owing to a straightforward invocation of the Chebyshev inequality, by noting that the mean approaches $\sigma^2$ and the variance approaches zero. However, the estimator admits a nonzero bias $\sigma^2/N$ for any finite $N$.

\paragraph{An estimator can be consistent but biased even asymptotically.}
In the preceding example, the bias $\sigma^2/N$ approaches zero and hence the estimator is asymptotically unbiased. Other examples exist for the estimator to be biased even asymptotically. Consider estimating the quantity $h=0$ with an estimator $g^N$ that takes the value $0$ with probability $(N-1)/N$ and the value $N$ with probability $1/N$. Then, the probability that $g^N$ departs from zero approaches zero and hence it is consistent. However, $\mean[g^N]=1$ and thus the bias does not vanish as $N$ increases.

\subsection{Contributions of This Work}
To the best of our knowledge, this is the first work that studies the convergence behavior of SGD with consistent gradient estimators, which result from a real-world graph learning scenario that will be elaborated in the next section. With the emergence of graph deep learning models~\citep{Bruna2014,Defferrard2016,Li2016,Kipf2017,Hamilton2017,Gilmer2017,Velickovic2018}, the scalability bottleneck caused by the expensive computation of the sample gradient becomes a pressing challenge for training (as well as inference) with large graphs. We believe that this work underpins the theoretical foundation of the efficient training of a series of graph neural networks. The theory reassures practitioners of doubts on the convergence of their optimization solvers.

Encouragingly, consistent estimators result in a similar convergence behavior as do unbiased ones. The results obtained here, including the proof strategy, offer convenience for further in-depth analysis under the same problem setting. This work opens the opportunity of improving the analysis, in a manner similar to the proliferation of SGD work, from the angles of relaxing assumptions, refining convergence rates, and designing acceleration techniques.

We again emphasize that unbiasedness and consistency are two separate concepts; neither subsumes the other. One may trace that we intend to write the error bounds for consistent gradient estimators in a manner similar to the expectation bounds in standard SGD results. Such a resemblance (e.g., in convergence rates) consolidates the foundation of stochastic optimization built so far.

\section{Motivating Application: Representation Learning of Graph Nodes}
\label{sec:motivation}
For a motivating application, consider the graph convolutional network model, GCN~\citep{Kipf2017}, that learns embedding representations of graph nodes. The $l$-th layer of the network is compactly written as
\begin{equation}\label{eqn:H}
H^{(l+1)}=\sigma(\wh{A}H^{(l)}W^{(l)}),
\end{equation}
where $\wh{A}$ is a normalization of the graph adjacency matrix, $W^{(l)}$ is a parameter matrix, and $\sigma$ is a nonlinear activation function. The matrix $H^{(l)}$ contains for each row the embedding of a graph node input to the $l$-th layer, and similarly for the output matrix $H^{(l+1)}$. With $L$ layers, the network transforms an initial feature input matrix $H^{(0)}$ to the output embedding matrix $H^{(L)}$. For a node $v$, the embedding $H^{(L)}(v,:)$ may be fed into a classifier for prediction.

Clearly, in order to compute the gradient of the loss for $v$, one needs the corresponding row of $H^{(L)}$, the rows of $H^{(L-1)}$ corresponding to the neighbors of $v$, and further recursive neighbors across each layer, all the way down to $H^{(0)}$. The computational cost of the unbiased gradient estimator is rather high. In the worst case, all rows of $H^{(0)}$ are involved.

To resolve the inefficiency, \citet{Chen2018} proposed an alternative gradient estimator that is biased but consistent. The simple and effective idea is to sample a constant number of nodes in each layer to restrict the size of the multihop neighborhood. For notational clarity, the approach may be easier to explain for a network with a single layer; theoretical results for more layers straightforwardly follow that of Theorem~\ref{thm:consistent} below, through induction.

The approach generalizes the setting from a finite graph to an infinite graph, such that the matrix expression~\eqref{eqn:H} becomes an integral transform. In particular, the input feature vector $H^{(0)}(u,:)$ for a node $u$ is generalized to a feature function $X(u)$, and the output embedding vector $H^{(1)}(v,:)$ for a node $v$ is generalized to an embedding function $Z(v)$, where the random variables $u$ and $v$ in two sides of the layer reside in different probability spaces, with probability measures $P(u)$ and $P(v)$, respectively. Furthermore, the matrix $\wh{A}$ is generalized into a bivariate kernel $\wh{A}(v,u)$ and the loss $\ell$ is written as a function of the output $Z(v)$. Then, \eqref{eqn:risk} and~\eqref{eqn:H} become
\begin{equation}\label{eqn:f.Z}
f=\mean_{v\sim P(v)}[\ell(Z(v))] \quad\text{with}\quad
Z(v)=\sigma\left(\int \wh{A}(v,u)X(u)W\,dP(u)\right).
\end{equation}

Such a functional generalization facilitates sampling on all network layers for defining a gradient estimator. In particular, defining $B(v)=\int \wh{A}(v,u)X(u)\,dP(u)$, simple calculation reveals that the gradient with respect to the parameter matrix $W$ is
\[
G:=\nabla f=\int q(B(v))\,dP(v), \quad\text{where}\quad
q(B)=B^T\nabla h(BW) \quad\text{and}\quad h=\ell\circ\sigma.
\]
Then, one may use $t$ iid samples of $u$ in the input and $s$ iid samples of $v$ in the output to define an estimator of $G$:
\[
G_{st}:=\frac{1}{s}\sum_{i=1}^s q(B_t(v_i)), \quad v_i\sim P(v), \quad\text{with}\quad
B_t(v):=\frac{1}{t}\sum_{j=1}^t \wh{A}(v,u_j)X(u_j), \quad u_j\sim P(u).
\]
The gradient estimator $G_{st}$ so defined is consistent; see a proof in the supplementary material.

\begin{theorem}\label{thm:consistent}
If $q$ is continuous and $f$ is finite, then $\plim_{s,t\to\infty}G_{st}=G$.
\end{theorem}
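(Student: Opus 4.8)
The plan is to show that $B_t(v)\to B(v)$ \wpone\ for (almost) every fixed $v$, then bootstrap this into convergence of $G_{st}$ by combining the continuity of $q$ with a law of large numbers for the outer average over $v$. First I would observe that for each fixed $v$, the summands $A(v,u_j)X(u_j)$, $j=1,\dots,t$, are iid with mean $B(v)=\int A(v,u)X(u)\,dP(u)$; hence, by the strong law of large numbers, $B_t(v)\to B(v)$ \wpone. Applying continuity of $q$ gives $q(B_t(v))\to q(B(v))$ \wpone\ for each such $v$. Separately, for fixed realizations the outer samples $v_1,\dots,v_s$ are iid, so by the strong law $\frac{1}{s}\sum_{i=1}^s q(B(v_i))\to \int q(B(v))\,dP(v)=G$ \wpone.

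The remaining work is to handle the fact that $G_{st}$ averages $q(B_t(v_i))$, not $q(B(v_i))$ — so both indices $s$ and $t$ move together and the inner approximation error must be controlled uniformly enough across the outer samples. The key step is to write
\[
G_{st}-G=\frac{1}{s}\sum_{i=1}^s\bigl(q(B_t(v_i))-q(B(v_i))\bigr)+\left(\frac{1}{s}\sum_{i=1}^s q(B(v_i))-G\right),
\]
and argue each term vanishes \wpone\ as $s,t\to\infty$. The second term is exactly the outer SLLN above. For the first term, the natural route is a uniform (or dominated) convergence argument: one wants $\sup_i\|q(B_t(v_i))-q(B(v_i))\|\to 0$, which follows if $q$ is uniformly continuous on a relevant compact set and $B_t(\cdot)\to B(\cdot)$ uniformly over the sampled $v_i$'s, or alternatively via a dominated-convergence style bound on the average. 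I would likely invoke the continuity of $q$ together with (an implicit) boundedness/compactness of the range of $B_t$ and $B$, so that uniform continuity of $q$ converts the uniform smallness of $B_t-B$ into uniform smallness of the $q$-differences.

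The main obstacle I expect is precisely this interchange-of-limits issue in the first term: continuity of $q$ alone gives pointwise-in-$v$ convergence $q(B_t(v))\to q(B(v))$, but averaging over $s$ growing simultaneously with $t$ requires either a uniform modulus of continuity for $q$ on a set containing all the $B_t(v_i)$, or an integrable dominating function to push a limit through the average — and justifying such boundedness from the bare hypothesis ``$q$ continuous'' is the delicate point. I anticipate the proof resolves this by restricting attention to a compact neighborhood of $\{B(v)\}$ on which, for $t$ large, all $B_t(v_i)$ lie \wpone, so that uniform continuity applies; handling the joint limit $\lim_{s,t\to\infty}$ cleanly (as opposed to iterated limits) is the part that needs the most care.
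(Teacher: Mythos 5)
Your skeleton is the same as the paper's, but the paper's entire proof is three sentences: the strong law of large numbers gives $B_t(v)\to B(v)$ with probability 1 for each fixed $v$, the continuous mapping theorem gives $q(B_t(v))\to q(B(v))$ with probability 1, and a second application of the strong law over the $v$-space concludes $G_{st}\to G$. The paper never writes your decomposition, never invokes uniform continuity or a dominating function, and never confronts the joint-versus-iterated limit question; in effect it establishes the iterated limit ($t\to\infty$ first, then $s\to\infty$). Your decomposition makes this explicit and shows why the iterated limit is easy: the second term is exactly the outer strong law, and for each \emph{fixed} $s$ the first term is a finite average of terms each converging to zero almost surely (inner strong law plus continuity of $q$ at the finitely many points $B(v_i)$), so no uniform modulus of continuity is needed. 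The obstacle you flag is genuine only if one insists on the true joint limit: since the same samples $u_1,\dots,u_t$ are shared by all $v_i$ and $s$ grows with $t$, one would need something like a uniform law of large numbers for $B_t(\cdot)$ over the support of $v$ together with uniform continuity of $q$ on a compact neighborhood of the range of $B$, or an integrable envelope --- none of which follows from ``$q$ continuous'' alone, and none of which the paper supplies either. So your plan is at least as careful as the published argument; to match the paper you can simply drop the uniformity step and settle for the iterated limit, whereas making the joint limit fully rigorous would require hypotheses beyond those stated in the theorem.
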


\section{Setting and Notations}
We now settle the notations for SGD. We are interested in the (constrained) optimization problem
\[
\min_{w\in S} f(w),
\]
where the feasible region $S$ is convex. This setting includes the unconstrained case $S=\real^d$. We assume that the objective function $f:\real^d\to\real$ is subdifferentiable; and use $\partial f(w)$ to denote the subdifferential at $w$. When it is necessary to refer to an element of this set, we use the notation $h$. If $f$ is differentiable, then clearly, $\partial f(w)=\{\nabla f(w)\}$.

The standard update rule for SGD is $w_{k+1}=\Pi_{S}(w_k-\gamma_kg_k)$, where $g_k$ is the negative search direction at step $k$, $\gamma_k$ is the step size, and $\Pi_S$ is the projection onto the feasible region: $\Pi_S(w):=\argmin_{u\in S}\|w-u\|$. For unconstrained problems, the projection is clearly omitted: $w_{k+1}=w_k-\gamma_kg_k$.

Denote by $w^*$ the global minimum. We assume that $w^*$ is an interior point of $S$, so that the subdifferential of $f$ at $w^*$ contains zero. For differentiable $f$, this assumption simply means that $\nabla f(w^*)=0$.

Typical convergence results are concerned with how fast the iterate $w_k$ approaches $w^*$, or the function value $f(w_k)$ approaches $f(w^*)$. Sometimes, the analysis is made convenient through a convexity assumption on $f$, such that the average of historical function values $f(w_i)$, $i=1,\ldots,k$, is lowered bounded by $f(\overline{w}_k)$, with $\overline{w}_k$ being the cumulative moving average $\overline{w}_k=\frac{1}{k}\sum_{i=1}^k w_i$.

The following definitions are frequently referenced.

\begin{definition}\label{def:strongly.convex}
  We say that $f$ is \emph{$l$-strongly convex} (with $l>0$) if for all $w,u\in\real^d$ and $h_u\in\partial f(u)$,
  \[
  f(w)-f(u) \ge \langle h_u, w-u \rangle + \frac{l}{2}\|w-u\|^2.
  \]
\end{definition}

\begin{definition}\label{def:Lipschitz}
  We say that $f$ is \emph{$L$-smooth} (with $L>0$) if it is differentiable and for all $w,u\in\real^d$,
  \[
  \|\nabla f(w)-\nabla f(u)\| \le L\|w-u\|.
  \]
\end{definition}

\section{Convergence Results}
Recall that an estimator $g^N$ of $h$ is consistent if for any $\epsilon>0$,
\begin{equation}\label{eqn:plim}
\lim_{N\to\infty} \Pr(\|g^N-h\|>\epsilon)=0.
\end{equation}
In our setting, $h$ corresponds to an element of the subdifferential at step $k$; i.e., $h_k\in\partial f(w_k)$, $g^N$ corresponds to the negative search direction $g_k$, and $N$ corresponds to the sample size $N_k$. That $g_k^{N_k}$ converges to $h_k$ in probability does not imply that $g_k^{N_k}$ is unbiased. Hence, a natural question asks what convergence guarantees exist when using $g_k^{N_k}$ as the gradient estimator. This section answers that question.

First, note that the sample size $N_k$ is associated with not only $g_k^{N_k}$, but also the new iterate $w_{k+1}^{N_k}$. We omit the superscript $N_k$ in these vectors to improve readability.

Similar to the analysis of standard SGD, which is built on the premise of the unbiasedness of $g_k$ and the boundedness of the gradient, in the following subsection we elaborate the parallel assumptions in this work. They are stated only once and will not be repeated in the theorems that follow, to avoid verbosity.

\subsection{Assumptions}
The convergence~\eqref{eqn:plim} of the estimator does not characterize how fast it approaches the truth. One common assumption is that the probability in~\eqref{eqn:plim} decreases exponentially with respect to the sample size. That is, we assume that there exists a step-dependent constant $C_k>0$ and a nonnegative function $\tau(\delta)$ on the positive axis such that
\begin{equation}\label{eqn:exp.dec.prob}
\Pr\Big(\|g_k-h_k\|\ge\delta\|h_k\| \,\Big\vert\, g_1,\ldots,g_{k-1}\Big)
\le C_ke^{-N_k\tau(\delta)}
\end{equation}
for all $k>1$ and $\delta>0$. A similar assumption is adopted by~\citet{de-Mello2008} that studied stochastic optimization through sample average approximation. In this case, the exponential tail occurs when the individual moment generating functions exist, a simple application of the Chernoff bound. For the motivating application GCN, the tail is indeed exponential as evidenced by Figure~\ref{fig:assumption}.

Note the conditioning on the history $g_1,\ldots,g_{k-1}$ in~\eqref{eqn:exp.dec.prob}. The reason is that $h_k$ (i.e., the gradient $\nabla f(w_k)$ if $f$ is differentiable) is by itself a random variable dependent on history. In fact, a more rigorous notation for the history should be \emph{filtration}, but we omit the introduction of unnecessary additional definitions here, as using the notion $g_1,\ldots,g_{k-1}$ is sufficiently clear.

\begin{assumption}\label{assp:conssitent}
  The gradient estimator $g_k$ is consistent and obeys~\eqref{eqn:exp.dec.prob}.
\end{assumption}

The use of a tail bound assumption, such as~\eqref{eqn:exp.dec.prob}, is to reverse-engineer the required sample size given the desired probability that some event happens. In this particular case, consider the setting where $T$ SGD updates are run. For any $\delta\in(0,1)$, define the event
\[
E_{\delta}=\Big\{ \|g_1-h_1\|\le\delta\|h_1\|
\text{ and } \|g_2-h_2\|\le\delta\|h_2\|
\text{ and } \ldots \text{ and } \|g_T-h_T\|\le\delta\|h_T\| \Big\}.
\]
Given~\eqref{eqn:exp.dec.prob} and any $\epsilon\in(0,1)$, one easily calculates that if the sample sizes satisfy
\begin{equation}\label{eqn:condition.N}
N_k \ge \tau(\delta)^{-1}\log(TC_k/\epsilon),
\end{equation}
for all $k$, then,
\[
\Pr(E_{\delta}) \ge \prod_{k=1}^T(1-C_ke^{-N_k\tau(\delta)}) \ge \prod_{k=1}^T(1-\epsilon/T)
\ge 1-\epsilon.
\]
Hence, all results in this section are established under the event $E_{\delta}$ that occurs with probability at least $1-\epsilon$, a sufficient condition of which is~\eqref{eqn:condition.N}.

The sole purpose of the tail bound assumption~\eqref{eqn:exp.dec.prob} is to establish the relation between the required sample sizes (as a function of $\delta$ and $\epsilon$) and the event $E_{\delta}$, on which convergence results in this work are based. One may replace the assumption by using other tail bounds as appropriate. It is out of the scope of this work to quantify the rate of convergence of the gradient estimator for a particular use case. For GCN, the exponential tail that agrees with~\eqref{eqn:exp.dec.prob} is illustrated in Section~\ref{sec:exp.prob.convergence}.

Additionally, parallel to the bounded-gradient condition for standard SGD analysis, we impose the following assumption.

\begin{assumption}\label{assp:bounded.grad}
  There exists a finite $G>0$ such that $\|h\|\le G$ for all $h\in \partial f(w)$ and $w\in S$.
\end{assumption}

\subsection{Results}\label{sec:convg.theorems}
Let us begin with the strongly convex case. For standard SGD with unbiased gradient estimators, ample results exist that indicate $O(1/T)$ convergence%
\footnote{Ignoring the logarithmic factor, if any.}
for the expected error, where $T$ is the number of updates; see, e.g., (2.9)--(2.10) of~\citet{Nemirovski2009} and Section 3.1 of~\citet{Lacoste-Julien2012}. We derive similar results for consistent gradient estimators, as stated in the following Theorem~\ref{thm:strongly.convex.1.high.prob}. Different from the unbiased case, it is the error, rather than the expected error, to be bounded. The tradeoff is the introduction of the relative gradient estimator error $\delta$, which relates to the sample sizes as in~\eqref{eqn:condition.N} for guaranteeing satisfaction of the bound with high probability.

\begin{theorem}\label{thm:strongly.convex.1.high.prob}
  Let $f$ be $l$-strongly convex with $l \le G/\|w_1-w^*\|$. Assume that $T$ updates are run, with diminishing step size $\gamma_k=[(l-\delta)k]^{-1}$ for $k=1,2,\ldots,T$, where $\delta=\rho/T$ and $\rho<l$ is an arbitrary constant independent of $T$. Then, for any such $\rho$, any $\epsilon\in(0,1)$, and sufficiently large sample sizes satisfying~\eqref{eqn:condition.N}, with probability at least $1-\epsilon$, we have
  \begin{equation}\label{eqn:strongly.convex.1.1.high.prob}
    \|w_T-w^*\|^2 \le \frac{G^2}{T}\left[\frac{(1+\rho/T)^2+\rho(l-\rho/T)}{(l-\rho/T)^2}\right],
  \end{equation}
  and
  \begin{equation}\label{eqn:strongly.convex.1.2.high.prob}
    f(\overline{w}_T)-f(w^*) \le \frac{G^2}{2T}\left[\rho+\frac{(1+\rho/T)^2}{l-\rho/T}(1+\log T)\right].
  \end{equation}
\end{theorem}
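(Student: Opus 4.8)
The plan is to reprise the strongly-convex SGD analysis underlying Theorem~\ref{thm:strongly.convex.1}, replacing the unbiasedness identity $\mean[g_k]=h_k$ by the deterministic bound $\|g_k-h_k\|\le\delta\|h_k\|$, which holds simultaneously for all $k\le T$ on the event $E_\delta$. Since $\Pr(E_\delta)\ge1-\epsilon$ once the sample sizes satisfy~\eqref{eqn:condition.N}, it suffices to prove both displayed inequalities pointwise on $E_\delta$; no expectations or filtrations are needed, which is what makes the high-probability statement essentially as clean as the unbiased one. On $E_\delta$ we also get $\|g_k\|\le(1+\delta)\|h_k\|\le(1+\delta)G$ and $\|h_k\|\le G$ from~(C2).

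The core step is a one-step recursion. Nonexpansiveness of $\Pi_S$ and $w^*\in S$ give $\|w_{k+1}-w^*\|^2\le\|w_k-w^*\|^2-2\gamma_k\langle g_k,w_k-w^*\rangle+\gamma_k^2\|g_k\|^2$. Writing $g_k=h_k+(g_k-h_k)$: strong convexity (Definition~\ref{def:strongly.convex}) gives $\langle h_k,w_k-w^*\rangle\ge f(w_k)-f(w^*)+\tfrac l2\|w_k-w^*\|^2$, and $0\in\partial f(w^*)$ gives $f(w_k)-f(w^*)\ge\tfrac l2\|w_k-w^*\|^2$; for the error term, Cauchy--Schwarz with $2ab\le a^2+b^2$ gives $|\langle g_k-h_k,w_k-w^*\rangle|\le\delta\|h_k\|\|w_k-w^*\|\le\tfrac\delta2 G^2+\tfrac\delta2\|w_k-w^*\|^2$. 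With $\gamma_k=[(l-\delta)k]^{-1}$ these combine into two forms I would use: keeping one factor $f(w_k)-f(w^*)$,
\[
\|w_{k+1}-w^*\|^2\le\Big(1-\tfrac1k\Big)\|w_k-w^*\|^2-2\gamma_k\big(f(w_k)-f(w^*)\big)+\tfrac1k\,\tfrac{\delta G^2}{l-\delta}+\tfrac1{k^2}\,\tfrac{(1+\delta)^2 G^2}{(l-\delta)^2},
\]
and, spending that factor via $f(w_k)-f(w^*)\ge\tfrac l2\|w_k-w^*\|^2$ instead, $\|w_{k+1}-w^*\|^2\le(1-\tfrac2k)\|w_k-w^*\|^2+\tfrac pk+\tfrac q{k^2}$ with $p=\tfrac{\delta G^2}{l-\delta}$ and $q=\tfrac{(1+\delta)^2 G^2}{(l-\delta)^2}$.

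For~\eqref{eqn:strongly.convex.1.1.high.prob} I would iterate the second form. At $k=2$ the factor $1-2/k$ vanishes, so $\|w_3-w^*\|^2\le\tfrac p2+\tfrac q4\le p+\tfrac q3$, which is the base case ($T=3$); then induction on $k\ge3$ closes, since $\|w_k-w^*\|^2\le p+\tfrac qk$ implies $\|w_{k+1}-w^*\|^2\le p\tfrac{k-1}{k}+q\tfrac{k-1}{k^2}\le p+\tfrac q{k+1}$ using $(k-1)(k+1)\le k^2$. Taking $k=T$ and substituting $\delta=\rho/T$ gives~\eqref{eqn:strongly.convex.1.1.high.prob}. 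For~\eqref{eqn:strongly.convex.1.2.high.prob} I would rearrange the first form, divide by $2\gamma_k$, and sum over $k=1,\dots,T$: the quadratic terms telescope (their coefficients become $\tfrac{(l-\delta)(k-1)}{2}$ on $\|w_k-w^*\|^2$ and $\tfrac{(l-\delta)k}{2}$ on $\|w_{k+1}-w^*\|^2$, so the sum collapses to $-\tfrac{(l-\delta)T}{2}\|w_{T+1}-w^*\|^2\le0$), leaving $\sum_{k=1}^T(f(w_k)-f(w^*))\le\tfrac{T\delta G^2}{2}+\tfrac{(1+\delta)^2 G^2}{2(l-\delta)}\sum_{k=1}^T\tfrac1k$. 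Dividing by $T$, applying convexity ($f(\bar w_T)\le\tfrac1T\sum_k f(w_k)$), bounding $\sum_{k=1}^T 1/k\le1+\log T$, and substituting $\delta=\rho/T$ yields~\eqref{eqn:strongly.convex.1.2.high.prob}.

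I expect the only real subtlety, inherited from Theorem~\ref{thm:strongly.convex.1}, to be matching the constants \emph{exactly} in the iterate bound: a naive $(1-1/k)$ recursion would cost a spurious $\log T$ factor, so one must spend strong convexity twice to obtain the sharper $(1-2/k)$ recursion and start the induction at $k=3$ (the $k=2$ step conveniently erases any dependence on $\|w_2-w^*\|^2$, so no assumption on the initial point is needed). Everything else is bookkeeping, and letting $\rho\to0$ (hence $\delta\to0$) in both bounds recovers the unbiased-case statements of Theorem~\ref{thm:strongly.convex.1}, as the text anticipates.
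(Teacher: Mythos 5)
Your proposal is correct and follows essentially the same route as the paper: the same one-step recursion from nonexpansive projection, strong convexity used either once (keeping $f(w_k)-f(w^*)$) or twice (yielding the contraction), the same Cauchy--Schwarz/AM--GM handling of $\langle g_k-h_k,w_k-w^*\rangle$ and $\|g_k\|\le(1+\delta)\|h_k\|$, and the same telescoping sum for the averaged-iterate bound, all carried out deterministically on the event $E_\delta$. The only difference is cosmetic: you relax the contraction factor to $1-2/k$ and spell out the induction (base case $k=3$, bound $p+q/k$) that the paper dismisses as "a simple induction," and your resulting bounds match the stated ones exactly after substituting $\delta=\rho/T$.
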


Note the assumption on $l$ in Theorem~\ref{thm:strongly.convex.1.high.prob}. This assumption is mild since if $f$ is $l$-strongly convex, it is also $l'$-strongly convex for all $l'<l$. The assumption is needed in the induction proof of~\eqref{eqn:strongly.convex.1.1.high.prob} when establishing the base case $\|w_1-w^*\|$. One may remove this assumption at the cost of a cumbersome right-hand side of~\eqref{eqn:strongly.convex.1.1.high.prob}, over which we favor a neater expression in the current form.

With an additional smoothness assumption, we may eliminate the logarithmic factor in~\eqref{eqn:strongly.convex.1.2.high.prob} and obtain a result for the iterate $w_T$ rather than the running average $\overline{w}_T$. The result is a straightforward consequence of~\eqref{eqn:strongly.convex.1.1.high.prob}.

\begin{theorem}\label{thm:strongly.convex.2.high.prob}
  Under the conditions of Theorem~\ref{thm:strongly.convex.1.high.prob}, additionally let $f$ be $L$-smooth. Then, for any $\rho$ satisfying the conditions, any $\epsilon\in(0,1)$, and sufficiently large sample sizes satisfying~\eqref{eqn:condition.N}, with probability at least $1-\epsilon$, we have
  \begin{equation}\label{eqn:strongly.convex.2.1.high.prob}
    f(w_T)-f(w^*) \le \frac{LG^2}{2T}\left[\frac{(1+\rho/T)^2+\rho(l-\rho/T)}{(l-\rho/T)^2}\right].
  \end{equation}
\end{theorem}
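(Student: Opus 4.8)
The plan is to reuse, now in the high-probability regime, exactly the reduction that turns Theorem~\ref{thm:strongly.convex.1} into Theorem~\ref{thm:strongly.convex.2}: bound the function-value gap by the squared iterate distance via $L$-smoothness, then invoke the iterate bound already established. Concretely, I would first note that the hypotheses here are precisely those of Theorem~\ref{thm:strongly.convex.1.high.prob} plus $L$-smoothness. Hence for any admissible $\rho$, any $\epsilon\in(0,1)$, and sample sizes satisfying~\eqref{eqn:condition.N}, the event $E_{\delta}$ with $\delta=\rho/T$ occurs with probability at least $1-\epsilon$, and on that event inequality~\eqref{eqn:strongly.convex.1.1.high.prob} holds, i.e. $\|w_T-w^*\|^2 \le \frac{G^2}{T}[\frac{(1+\rho/T)^2+\rho(l-\rho/T)}{(l-\rho/T)^2}]$.

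Next I would apply smoothness at the optimum. Since $f$ is $L$-smooth it is differentiable, and the standard quadratic upper bound gives, for every $w$,
\[
f(w) \le f(w^*) + \langle \nabla f(w^*), w-w^* \rangle + \frac{L}{2}\|w-w^*\|^2.
\]
By the standing assumption $w^*$ is an interior point of $S$, so $\nabla f(w^*)=0$, whence $f(w_T)-f(w^*) \le \frac{L}{2}\|w_T-w^*\|^2$. This inequality is deterministic once $w_T$ is realized; it introduces no additional failure probability.

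Combining the two bounds on the event $E_{\delta}$ yields
\[
f(w_T)-f(w^*) \le \frac{LG^2}{2T}\left[\frac{(1+\rho/T)^2+\rho(l-\rho/T)}{(l-\rho/T)^2}\right],
\]
which is the claimed statement, valid with probability at least $1-\epsilon$ under the stated sample-size condition. As a sanity check, $\rho=0$ collapses the bracket to $1/l^2$ and recovers $f(w_T)-f(w^*) \le LG^2/(2l^2T)$, matching Theorem~\ref{thm:strongly.convex.2} at $k=T$ (and without the expectation sign).

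There is essentially no difficult step: the theorem is an immediate corollary of Theorem~\ref{thm:strongly.convex.1.high.prob}. The only points deserving a moment's care are that the conditioning event $E_{\delta}$, the sample-size threshold $N$, and the failure probability $\epsilon$ are literally the same as in Theorem~\ref{thm:strongly.convex.1.high.prob}, so no union bound or extra probability loss is incurred; and that the smoothness inequality $f(w_T)-f(w^*) \le \frac{L}{2}\|w_T-w^*\|^2$ is legitimate, which hinges on $\nabla f(w^*)=0$, i.e. on $w^*$ being interior to $S$. Neither is a real obstacle.
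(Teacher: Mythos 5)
Your proposal is correct and follows essentially the same route as the paper: bound $f(w_T)-f(w^*)\le\frac{L}{2}\|w_T-w^*\|^2$ via $L$-smoothness (with $\nabla f(w^*)=0$) and then invoke inequality~\eqref{eqn:strongly.convex.1.1.high.prob} on the same event $E_{\delta}$, with no additional probability loss. Your added care in justifying the smoothness inequality and the shared event/sample-size condition matches what the paper's shorter proof leaves implicit.
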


In addition to $O(1/T)$ convergence, it is also possible to establish linear convergence (however) to a non-vanishing right-hand side, as the following result indicates. To obtain such a result, we use a constant step size. \citet{Bottou2016} show a similar result for the function value with an additional smoothness assumption in a different setting; we give one for the iterate error without the smoothness assumption using consistent gradients.

\begin{theorem}\label{thm:strongly.convex.3.high.prob}
  Under the conditions of Theorem~\ref{thm:strongly.convex.1.high.prob}, except that one sets a constant step size $\gamma_k=c$ with $0<c<(2l-\delta)^{-1}$ for all $k$, for any $\rho$ satisfying the conditions, any $\epsilon\in(0,1)$, and sufficiently large sample sizes satisfying~\eqref{eqn:condition.N}, with probability at least $1-\epsilon$, we have
  \begin{equation}\label{eqn:strongly.convex.3.1.high.prob}
    \|w_T-w^*\|^2 \le
    (1 - 2cl + c\delta)^{T-1} \|w_1-w^*\|^2
    + \frac{\delta + c(1+\delta)^2}{2l-\delta} G^2.
  \end{equation}
\end{theorem}

Compare~\eqref{eqn:strongly.convex.3.1.high.prob} with~\eqref{eqn:strongly.convex.1.1.high.prob} in Theorem~\ref{thm:strongly.convex.1.high.prob}. The former indicates that in the limit, the squared iterate error is upper bounded by a positive term proportional to $G^2$; the remaining part of this upper bound decreases at a linear speed. The latter, on the other hand, indicates that the squared iterate error in fact will vanish, although it does so at a sublinear speed $O(1/T)$.

For convex (but not strongly convex) $f$, typically $O(1/\sqrt{T})$ convergence is asserted for unbiased gradient estimators; see., e.g., Theorem 2 of~\citet{Liu2015}. These results are often derived based on an additional assumption that the feasible region is compact. Such an assumption is not restrictive, because even if the problem is unconstrained, one can always confine the search to a bounded region (e.g., an Euclidean ball). Under this condition, we obtain a similar result for consistent gradient estimators.

\begin{theorem}\label{thm:convex.high.prob}
  Let $f$ be convex and the feasible region $S$ have finite diameter $D>0$; that is, $\sup_{w,u\in S}\|w-u\|=D$. Assume that $T$ updates are run, with diminishing step size $\gamma_k=c/\sqrt{k}$ for $k=1,2,\ldots,T$ and for some $c>0$. Let $\delta=\rho/\sqrt{T}$ where $\rho>0$ is an arbitrary constant independent of $T$. Then, for any such $\rho$, any $\epsilon\in(0,1)$, and sufficiently large sample sizes satisfying~\eqref{eqn:condition.N}, with probability at least $1-\epsilon$, we have
  \begin{equation}\label{eqn:convex.1.high.prob}
    f(\overline{w}_T)-f(w^*) \le \frac{1}{2\sqrt{T}}\left[\left(\frac{1}{c}+\rho\right)D^2
      +G^2\left(\rho+c\left(1+\frac{\rho}{\sqrt{T}}\right)^2\sqrt{1+\frac{1}{T}}\right)\right].
  \end{equation}
\end{theorem}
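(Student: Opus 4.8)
The plan is to re-run the argument behind Theorem~\ref{thm:convex} almost verbatim, but to carry it out \emph{deterministically on the event $E_\delta$} instead of in expectation. The observation is that unbiasedness was used in exactly two places in the convex analysis --- to pass from $\langle g_k,w_k-w^*\rangle$ to $\langle h_k,w_k-w^*\rangle$, and to bound the second moment $\mean[\|g_k\|^2]$ --- and on $E_\delta$ both can be recovered, up to an additive term proportional to $\delta$, from the pathwise bound $\|g_k-h_k\|\le\delta\|h_k\|$ together with (C2). Since $\delta=\rho/\sqrt T$, this extra term is $O(1/\sqrt T)$, which is exactly the target rate, so --- unlike the strongly convex case, where the perturbation had to be absorbed into the step size --- no change of step size is needed and we keep $\gamma_k=c/\sqrt k$.

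Concretely, I would start from the nonexpansiveness of $\Pi_S$,
\[
\|w_{k+1}-w^*\|^2\le\|w_k-w^*\|^2-2\gamma_k\langle g_k,w_k-w^*\rangle+\gamma_k^2\|g_k\|^2 .
\]
On $E_\delta$, writing $g_k=h_k+(g_k-h_k)$ and using Cauchy--Schwarz, $\|w_k-w^*\|\le D$, and $\|h_k\|\le G$, the cross term obeys $\langle g_k,w_k-w^*\rangle\ge\langle h_k,w_k-w^*\rangle-\delta GD\ge (f(w_k)-f(w^*))-\delta GD$, the last inequality by the subgradient inequality for the convex $f$; and the triangle inequality gives $\|g_k\|^2\le(1+\delta)^2\|h_k\|^2\le(1+\delta)^2G^2$. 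Substituting, dividing by $2\gamma_k$, and summing $k=1,\dots,T$ gives
\[
\sum_{k=1}^{T}\big(f(w_k)-f(w^*)\big)\le\sum_{k=1}^{T}\frac{\|w_k-w^*\|^2-\|w_{k+1}-w^*\|^2}{2\gamma_k}+T\delta GD+\frac{(1+\delta)^2G^2}{2}\sum_{k=1}^{T}\gamma_k .
\]
The first sum is Abel-summed exactly as in the proof of Theorem~\ref{thm:convex}: since $1/\gamma_k$ is nondecreasing and $\|w_k-w^*\|^2\le D^2$, it telescopes to at most $D^2/(2\gamma_T)$. Dividing by $T$, bounding $f(\bar w_T)-f(w^*)\le\frac1T\sum_{k=1}^{T}(f(w_k)-f(w^*))$ by convexity, using the integral comparison for $\sum_{k=1}^{T}\gamma_k$, and finally applying AM--GM in the form $\delta GD\le\tfrac{\delta}{2}(D^2+G^2)$ before substituting $\gamma_k=c/\sqrt k$ and $\delta=\rho/\sqrt T$, yields the stated bound~\eqref{eqn:convex.1.high.prob}. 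Setting $\rho=0$ collapses every perturbation term and recovers the right-hand side of~\eqref{eqn:convex.1} without the expectation, as the sanity-check remark predicts.

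The entire computation above is conditional on $E_\delta$. That $\Pr(E_\delta)\ge1-\epsilon$ under the sample-size condition~\eqref{eqn:condition.N} has already been established in the section preamble, by a union bound over the $T$ steps applied to the conditional exponential tail~\eqref{eqn:exp.dec.prob} (the conditioning on $g_1,\dots,g_{k-1}$ being what lets the per-step events compose into a product). Hence~\eqref{eqn:convex.1.high.prob} holds with probability at least $1-\epsilon$.

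I expect no genuine obstacle here --- the convex case is the mildest of the three --- only bookkeeping: carrying the $(1+\delta)^2$ second-moment blow-up and the $\delta GD$ cross-term faithfully into the exact closed form, which relies on the same integral bound for $\sum_{k=1}^{T}\gamma_k$ used for Theorem~\ref{thm:convex}. One should also note, for the event $E_\delta$ to be well posed as stated, that $\delta=\rho/\sqrt T\in(0,1)$, i.e.\ $T>\rho^2$, which is automatic for all $T$ of interest since $\rho$ is a fixed constant.
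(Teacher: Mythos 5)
Your proof is correct and follows essentially the same route as the paper's: the same expansion of $\|w_{k+1}-w^*\|^2$ via the nonexpansive projection, the same per-step use of the event $E_\delta$ together with (C2), Abel summation against the nondecreasing $1/\gamma_k$ with the diameter bound, and the same substitutions $\gamma_k=c/\sqrt{k}$, $\delta=\rho/\sqrt{T}$, and $\sum_{k=1}^T 1/\sqrt{k}\le\sqrt{T+1}$. The only cosmetic difference is that you bound the cross term by $\delta GD$ and apply AM--GM at the very end, whereas the paper applies AM--GM per step (Lemma~\ref{lem:gk.bound}) and invokes the diameter during the summation; the two orderings produce identical constants and hence exactly the bound~\eqref{eqn:convex.1.high.prob}.
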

  
One may obtain a result of the same convergence rate by using a constant step size. In the case of unbiased gradient estimators, see Theorem 14.8 of~\citet{Shalev-Shwartz2014}. For such a result, one assumes that the step size is inversely proportional to $\sqrt{T}$. Such choice of the step size is common and is also used in the next setting.

For the general (nonconvex) case, convergence is typically gauged with the gradient norm. One again obtains $O(1/\sqrt{T})$ convergence results for unbiased gradient estimators; see, e.g., Theorem 1 of~\citet{Reddi2016} (which is a simplified consequence of the theory presented in \citet{Ghadimi2013}). We derive a similar result for consistent gradient estimators.

\begin{theorem}\label{thm:nonconvex.2.high.prob}
  Let $f$ be $L$-smooth and $S=\real^d$. Assume that $T$ updates are run, with constant step size $\gamma_k=D_f/[(1+\delta)G\sqrt{T}]$ for $k=1,2,\ldots,T$, where $D_f=[2(f(w_1)-f(w^*))/L]^{\frac{1}{2}}$, and $\delta\in(0,1)$ is an arbitrary constant. Then, for any such $\delta$, any $\epsilon\in(0,1)$, and sufficiently large sample sizes satisfying~\eqref{eqn:condition.N}, with probability at least $1-\epsilon$, we have
  \begin{equation}\label{eqn:nonconvex.2.1.high.prob}
    \min_{k=1,\ldots,T} \|\nabla f(w_k)\|^2 \le \frac{(1+\delta)LGD_f}{(1-\delta)\sqrt{T}}.
  \end{equation}
\end{theorem}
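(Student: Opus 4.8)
The plan is to mimic the proof of Theorem~\ref{thm:nonconvex.2} (the unbiased nonconvex case of Ghadimi--Lan / Reddi et al.), but to carry the error term $g_k-\nabla f(w_k)$ explicitly through the descent inequality rather than taking an expectation, and then to control it deterministically on the event $E_\delta$. First I would invoke $L$-smoothness to write the standard quadratic upper bound
\[
f(w_{k+1}) \le f(w_k) + \langle \nabla f(w_k), w_{k+1}-w_k\rangle + \frac{L}{2}\|w_{k+1}-w_k\|^2
= f(w_k) - \gamma_k\langle \nabla f(w_k), g_k\rangle + \frac{L\gamma_k^2}{2}\|g_k\|^2,
\]
using $S=\real^d$ so there is no projection. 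The next step is to decompose $g_k = \nabla f(w_k) + e_k$ with $e_k := g_k-\nabla f(w_k)$, so that $\langle \nabla f(w_k), g_k\rangle = \|\nabla f(w_k)\|^2 + \langle \nabla f(w_k), e_k\rangle$ and $\|g_k\|^2 \le (\|\nabla f(w_k)\| + \|e_k\|)^2$. On the event $E_\delta$ we have $\|e_k\| \le \delta\|h_k\| = \delta\|\nabla f(w_k)\|$ for every $k=1,\dots,T$, so $\langle\nabla f(w_k),e_k\rangle \ge -\delta\|\nabla f(w_k)\|^2$ and $\|g_k\|^2 \le (1+\delta)^2\|\nabla f(w_k)\|^2$. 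Substituting the constant step size $\gamma = D_f/[(1+\delta)G\sqrt{T}]$ and rearranging yields, on $E_\delta$,
\[
\gamma\bigl(1-\delta - \tfrac{L\gamma(1+\delta)^2}{2}\bigr)\|\nabla f(w_k)\|^2 \le f(w_k) - f(w_{k+1}).
\]

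The third step is to telescope this inequality over $k=1,\dots,T$, giving
\[
\Bigl(1-\delta - \tfrac{L\gamma(1+\delta)^2}{2}\Bigr)\sum_{k=1}^T \|\nabla f(w_k)\|^2 \le \frac{f(w_1)-f(w^*)}{\gamma},
\]
and then to lower-bound the left side by $T\bigl(1-\delta-\tfrac{L\gamma(1+\delta)^2}{2}\bigr)\min_k\|\nabla f(w_k)\|^2$. Plugging in the specific value of $\gamma$ and recalling $D_f^2 = 2(f(w_1)-f(w^*))/L$, the right-hand side $\frac{f(w_1)-f(w^*)}{\gamma} = \frac{L D_f^2}{2}\cdot\frac{(1+\delta)G\sqrt{T}}{D_f} = \frac{L D_f G (1+\delta)\sqrt{T}}{2}$; one also checks that with this $\gamma$ the term $\tfrac{L\gamma(1+\delta)^2}{2} = \tfrac{(1+\delta)LD_f}{4G\sqrt{T}}$, which is $o(1)$, and in particular for the clean bound one can absorb it so that $1-\delta-\tfrac{L\gamma(1+\delta)^2}{2} \ge \tfrac12(1-\delta)$ or, more sharply, track it to obtain exactly the stated constant. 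Dividing through and simplifying gives
\[
\min_{k=1,\dots,T}\|\nabla f(w_k)\|^2 \le \frac{(1+\delta)LGD_f}{(1-\delta)\sqrt{T}}
\]
on $E_\delta$, hence with probability at least $1-\epsilon$ by the bound $\Pr(E_\delta)\ge 1-\epsilon$ established before the assumptions, whenever the sample sizes satisfy~\eqref{eqn:condition.N}.

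I expect the only real subtlety to be bookkeeping the residual term $\tfrac{L\gamma(1+\delta)^2}{2}$ so that the final constant comes out as exactly $\tfrac{(1+\delta)}{(1-\delta)}$ rather than something messier: one must verify that the chosen $\gamma$ makes $1-\delta - \tfrac{L\gamma(1+\delta)^2}{2} \ge$ the right quantity, possibly by noting that the classical Ghadimi--Lan choice of $\gamma$ already balances these two pieces, or by a slightly looser bound that still collapses to the claimed form (and which, with $\delta=0$, recovers~\eqref{eqn:nonconvex.2.1} exactly, as the paper promises in its sanity check). The rest is a routine repetition of the unbiased argument with expectations replaced by the deterministic inequalities valid on $E_\delta$; no new ideas beyond the $\pm\delta\|\nabla f(w_k)\|$ sandwiching are needed.
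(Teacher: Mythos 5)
Your overall skeleton (smoothness bound, $w_{k+1}-w_k=-\gamma g_k$, sandwiching via $\|g_k-h_k\|\le\delta\|h_k\|$ on $E_\delta$, telescoping, then the probability bookkeeping for $E_\delta$) matches the paper, but the step you yourself flag as the ``only real subtlety'' is a genuine gap, not bookkeeping. By bounding $\|g_k\|^2\le(1+\delta)^2\|\nabla f(w_k)\|^2$ and trying to cancel the quadratic term against the linear one, you end up with
\[
\min_{k}\|\nabla f(w_k)\|^2 \;\le\; \frac{(1+\delta)LGD_f}{2\sqrt{T}}\cdot
\Bigl(1-\delta-\tfrac{L\gamma(1+\delta)^2}{2}\Bigr)^{-1},
\qquad \tfrac{L\gamma(1+\delta)^2}{2}=\tfrac{(1+\delta)LD_f}{2G\sqrt{T}}
\]
(note your $\tfrac{(1+\delta)LD_f}{4G\sqrt{T}}$ is off by a factor of $2$). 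To recover the stated constant you need $1-\delta-\tfrac{(1+\delta)LD_f}{2G\sqrt{T}}\ge\tfrac{1-\delta}{2}$, i.e.\ $G\sqrt{T}\ge\tfrac{(1+\delta)LD_f}{1-\delta}$, which is not among the hypotheses; the theorem is claimed for every $T$, and for small $T$ (or large $L D_f/G$) your prefactor can even become negative, at which point the telescoped inequality gives nothing. The classical Ghadimi--Lan choice of $\gamma$ does not rescue this, because that choice is tuned to a different decomposition.

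The paper's proof avoids the issue entirely: after $\langle h_k,g_k\rangle\ge(1-\delta)\|h_k\|^2$, it bounds the quadratic term by the \emph{uniform} gradient bound of assumption (C2), $\|g_k\|^2\le(1+\delta)^2\|h_k\|^2\le(1+\delta)^2G^2$, so that $\|\nabla f(w_k)\|^2$ appears only in the linear term with coefficient $\gamma(1-\delta)$. Dividing by $\gamma(1-\delta)$, telescoping, and using $f(w_{T+1})\ge f(w^*)$ gives two additive terms, $\frac{[\gamma(1-\delta)]^{-1}}{T}[f(w_1)-f(w^*)]$ and $\frac{L\gamma(1+\delta)^2G^2}{2(1-\delta)}$, and the step size $\gamma=D_f/[(1+\delta)G\sqrt{T}]$ makes each equal to $\frac{(1+\delta)LGD_f}{2(1-\delta)\sqrt{T}}$, summing to exactly the stated bound for all $T$ and reducing to the unbiased result at $\delta=0$. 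If you replace your cancellation step by this $G^2$ bound on the quadratic term, the rest of your argument goes through as written.
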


\subsection{Interpretation}
All the results in the preceding subsection assert convergence for SGD with the use of a consistent gradient estimator. As with the use of an unbiased one, the convergence for the strongly convex case is $O(1/T)$, or linear if one tolerates a non-vanishing upper bound, and the convex and nonconvex cases $O(1/\sqrt{T})$. These theoretical results, however, are based on assumptions of the sample size $N_k$ and the step size $\gamma_k$ that are practically challenging to verify. Hence, in a real-life machine learning setting, the sample size and the learning rate (the initial step size) are treated as hyperparameters to be tuned against a validation set.

Nevertheless, these results establish a qualitative relationship between the sample size and the optimization error. Naturally, to maintain the same failure probability $\epsilon$, the relative gradient estimator error $\delta$ decreases inversely with the sample size $N_k$. This intuition holds true in the tail bound condition~\eqref{eqn:exp.dec.prob} with~\eqref{eqn:condition.N}, when $\tau(\delta)$ is a monomial or a positive combination of monomials with different degrees. With this assumption, the larger is $N_k$, the smaller is $\delta$ (and also $\rho$, the auxiliary quantity defined in the theorems); hence, the smaller are the error bounds~\eqref{eqn:strongly.convex.1.1.high.prob}--\eqref{eqn:nonconvex.2.1.high.prob}.

\subsection{Remarks}
Theorem~\ref{thm:strongly.convex.3.high.prob} presents a linear convergence result for the strongly convex case, with a non-vanishing right-hand side. In fact, it is possible to obtain a result with the same convergence rate but a vanishing right-hand side, if one is willing to additionally assume $L$-smoothness. The following theorem departs from the set of theorems in Section~\ref{sec:convg.theorems} on the assumption of the sufficient sample size $N_k$ and the gradient error $\delta$.

\begin{theorem}\label{thm:strongly.convex.4.high.prob}
  Let $f$ be $l$-strongly convex and $L$-smooth with $l<L$. Assume that $T$ updates are run with constant step size $\gamma_k=1/L$ for $k=1,2,\ldots,T$. Let $\delta_k$, $k\ge1$ be a sequence where $\lim_{k\to\infty} \delta_{k+1}/\delta_k \le 1$. Then, for any positive $\eta<l/L$, $\epsilon\in(0,1)$, and sample sizes
  \[
  N_k \ge \tau(\delta_k/\|h_k\|)^{-1}\log(TC_k/\epsilon) \quad\text{for } k=1,2,\ldots,T,
  \]
  with probability at least $1-\epsilon$, we have
  \begin{equation}\label{eqn:strongly.convex.4.high.prob}
  f(w_T)-f(w^*) \le (1-l/L)^{T-1}[f(w_1)-f(w^*)] + O(E_T),
  \end{equation}
  where $E_T=\max\{\delta_T^2,\,(1-l/L+\eta)^T\}$.
\end{theorem}

Here, $\delta_k$ is the step-dependent gradient error. If it decreases to zero, then so does $E_T$. Theorem~\ref{thm:strongly.convex.4.high.prob} is adapted from~\citet{Friedlander2012}, who studied unbiased gradients as well as noisy gradients. We separate Theorem~\ref{thm:strongly.convex.4.high.prob} from those in Section~\ref{sec:convg.theorems} only for the sake of presentation clarity. The spirit, however, remains the same. Namely, consistent estimators result in the same convergence behavior (i.e., rate) as do unbiased ones. All results require an assumption on sufficient sample size owing to the probabilistic convergence of the gradient estimator.

\section{Numerical Illustrations}\label{sec:exp}
In this section, we report several experiments to illustrate the convergence behavior of SGD by using consistent gradient estimators. We base the experiments on the training of the GCN model~\citep{Kipf2017} motivated earlier (cf. Section~\ref{sec:motivation}). The code repository will be revealed upon paper acceptance.

\subsection{Data Sets}
We use three data sets for illustration, one synthetic and two real-world benchmarks.

The purpose of a synthetic data set is to avoid the regularity in the sampling of training/validation/test examples. The data set, called ``Mixture,'' is a mixture of three overlapping Gaussians. The points are randomly connected, with a higher probability for those within the same component than the ones straddling across components. See the supplementary material for details of the construction. Because of the significant overlap, a classifier trained with independent data points unlikely predicts well the component label, but a graph-based method is more likely to be successful.

Additionally, we use two benchmark data sets, Cora and Pubmed, often seen in the literature. These graphs are citation networks and the task is to predict the topics of the publications. We follow the split used in~\citet{Chen2018}. See the supplementary material for a summary of all data sets.

\subsection{(Strongly) Convex Case}
The GCN model is hyperparameterized by the number of layers. Without any intermediate layer, the model can be considered a generalized linear model and thus the cross-entropy loss function is convex. Moreover, with the use of an $L_2$ regularization, the loss becomes strongly convex. The predictive model reads $P=\softmax(\wh{A}XW^{(0)})$, where $X$ is the input feature matrix and $P$ is the output probability matrix, both row-wise. One easily sees that the only difference between this model and logistic regression $P=\softmax(XW^{(0)})$ is the neighborhood aggregation $\wh{A}X$.

Standard batched training in SGD samples a batch (denoted by the index set $I_1$) from the training set and evaluates the gradient of the loss of $\softmax(\wh{A}(I_1,:)XW^{(0)})$. In the analyzed consistent-gradient training, we additionally uniformly sample the input layer with another index set $I_0$ and evaluate instead the gradient of the loss of $\softmax(\frac{n}{|I_0|}\wh{A}(I_1,I_0)X(I_0,:)W^{(0)})$.

\begin{figure}[ht]
  \centering
  \subfigure[Mixture]{
    \includegraphics[width=.32\linewidth]{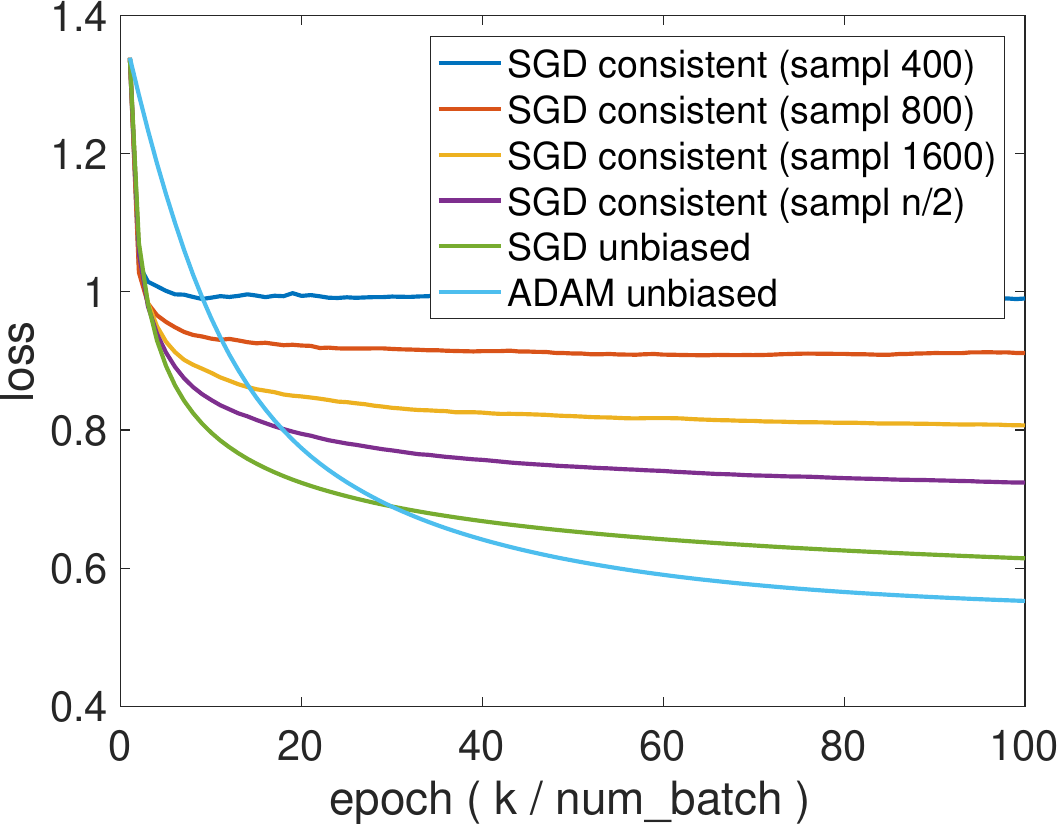}}
  \subfigure[Cora]{
    \includegraphics[width=.32\linewidth]{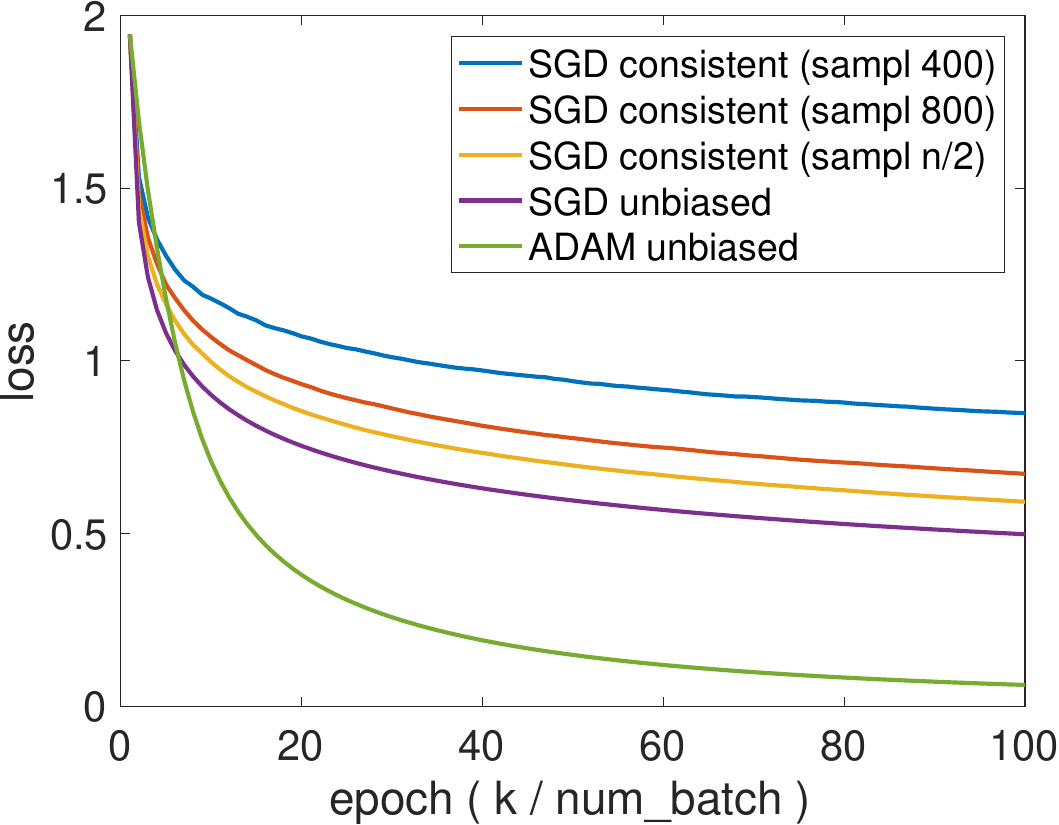}}
  \subfigure[Pubmed]{
    \includegraphics[width=.32\linewidth]{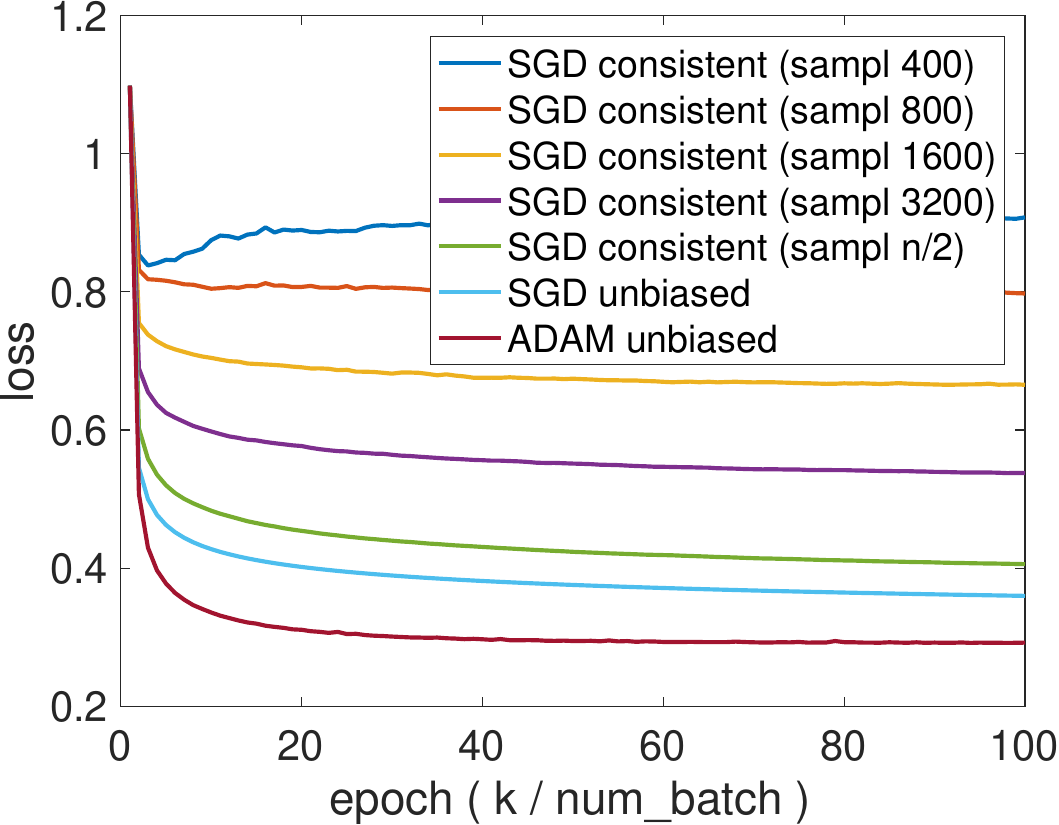}}
  \caption{Convergence history for 1-layer GCN, under different training algorithms.}
  \label{fig:convg.1layer}
\end{figure}

Figure~\ref{fig:convg.1layer} shows the convergence curves as the iteration progresses. The plotted quantity is the overall loss on all training examples, rather than the batch loss for only the current batch. Hence, not surprisingly the curves are generally quite smooth. We compare standard SGD with the use of consistent gradient estimators, with varying sample size $|I_0|$. Additionally, we compare with the Adam training algorithm~\citep{Kingma2015}, which is a stochastic optimization approach predominantly used in practice for training deep neural networks.

One sees that for all data sets, Adam converges faster than does standard SGD. Moreover, as the sample size increases, the loss curve with consistent gradients approaches that with an unbiased one (i.e., standard SGD). This phenomenon qualitatively agrees with the theoretical results; namely, larger sample size improves the error bound. Note that all curves in the same plot result from the same parameter initialization; and all SGD variants apply the same learning rate.

\begin{table}[ht]
  \centering
  \caption{Test accuracy (in percentage) and epoch number (inside parentheses) for different GCN architectures and training algorithms. For the same architecture, initialization is the same. The epoch number is the one when best validation accuracy occurs.}
  \label{tab:test.accuracy}
  \setlength{\tabcolsep}{4pt}
  \begin{tabular}{cccc}
    &\multicolumn{3}{c}{1-layer GCN} \\
    \hline
    & Mixture & Cora & Pubmed \\
    \hline
    SGD (400)     & 78.0 (68) & 85.8 (97) & 86.2 (15) \\
    SGD (800)     & 77.8 (46) & 86.1 (86) & 87.9 (68) \\
    SGD (1600)    & 77.9 (87) &    -      & 88.6 (35) \\
    SGD (3200)    &    -      &    -      & 88.9 (98) \\
    SGD unbiased  & 78.1 (93) & 84.2 (87) & 88.1 (75) \\
    Adam unbiased & 80.0 (95) & 84.9 (21) & 88.4 (20) \\
    \hline
  \end{tabular}\hspace{.1cm}
  \begin{tabular}{cccc}
    &\multicolumn{3}{c}{2-layer GCN} \\
    \hline
    & Mixture & Cora & Pubmed \\
    \hline
                  & 86.7 (76) & 87.1 (34) & 87.5 (88) \\
                  & 86.9 (87) & 85.8 (13) & 87.6 (87) \\
                  & 86.8 (94) &    -      & 88.3 (85) \\
                  &    -      &    -      & 88.1 (88) \\
                  & 86.8 (66) & 87.4 (27) & 87.9 (90) \\
                  & 87.6 (94) & 87.0 (04) & 88.0 (06) \\
    \hline
  \end{tabular}
\end{table}

It is important to note that the training loss is only a surrogate measure of the model performance; and often early termination of the optimization acts as a healthy regularization against over-fitting. In our setting, a small sample size may not satisfy the assumptions of the theoretical results, but it proves to be practically useful. In Table~\ref{tab:test.accuracy} (left), we report the test accuracy attained by different training algorithms at the epoch where validation accuracy peaks. One sees that Adam and standard SGD achieves similar accuracies, and that SGD with consistent gradient sometimes surpasses these accuracies. For Cora, a sample size 400 already yields an accuracy noticeably higher than do Adam and standard SGD.

\subsection{Nonconvex Case}
When GCN has intermediate layers, the loss function is generally nonconvex. A 2-layer GCN reads $P = \softmax(\wh{A}\cdot\relu(\wh{A}XW^{(0)})\cdot W^{(1)})$, and a GCN with more layers is analogous.

We repeat the experiments in the preceding subsection. The results are reported in Figure~\ref{fig:convg.2layer} and Table~\ref{tab:test.accuracy} (right). The observation of the loss curve follows the same as that in the convex case. Namely, Adam converges faster than does unbiased SGD; and the convergence curve with a consistent gradient approaches that with an unbiased one.

\begin{figure}[ht]
  \centering
  \subfigure[Mixture]{
    \includegraphics[width=.32\linewidth]{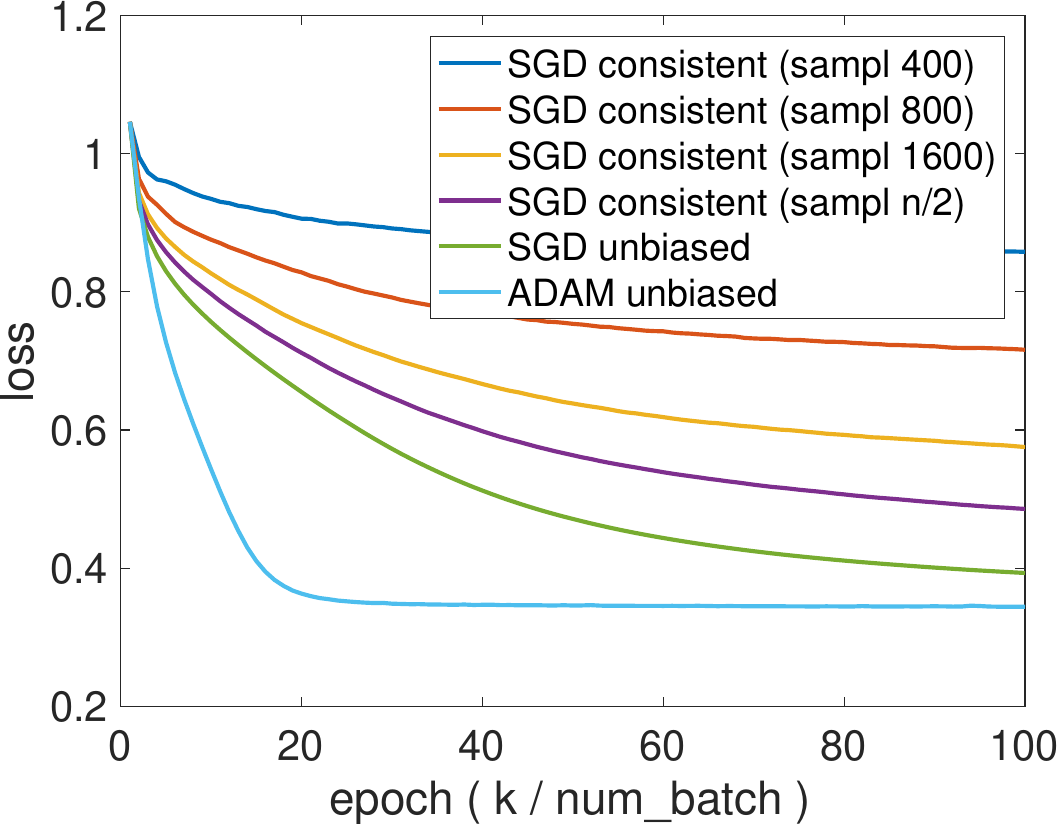}}
  \subfigure[Cora]{
    \includegraphics[width=.32\linewidth]{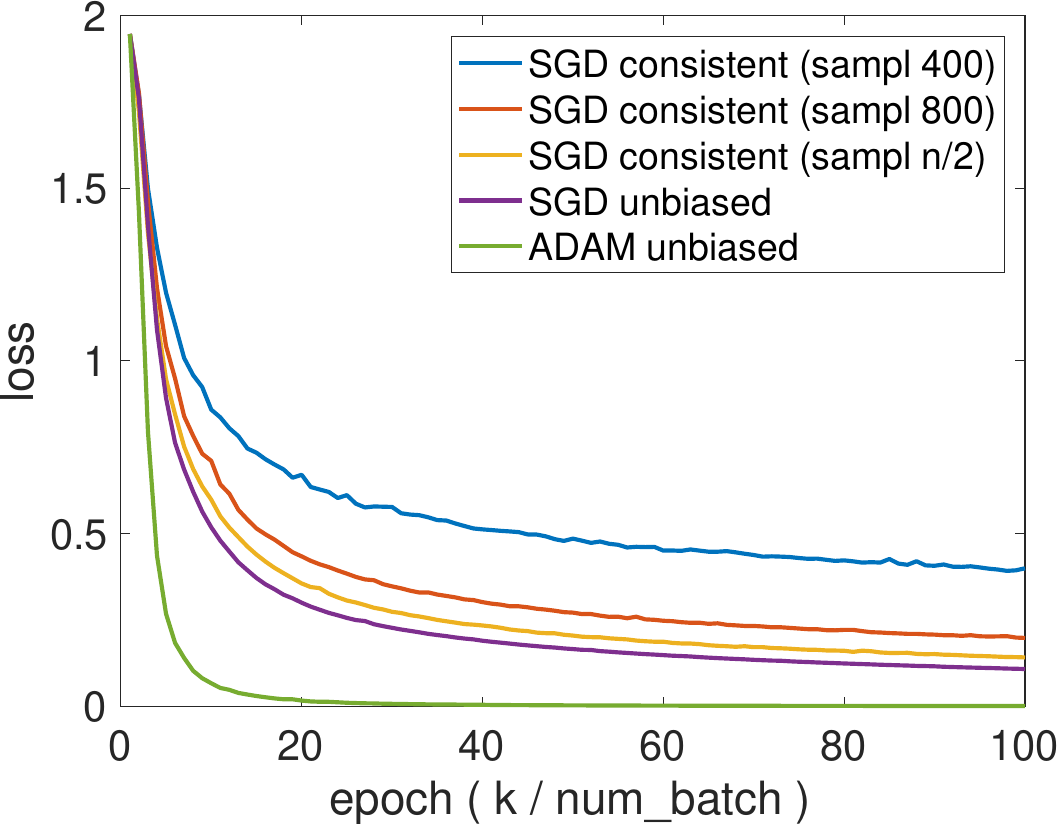}}
  \subfigure[Pubmed]{
    \includegraphics[width=.32\linewidth]{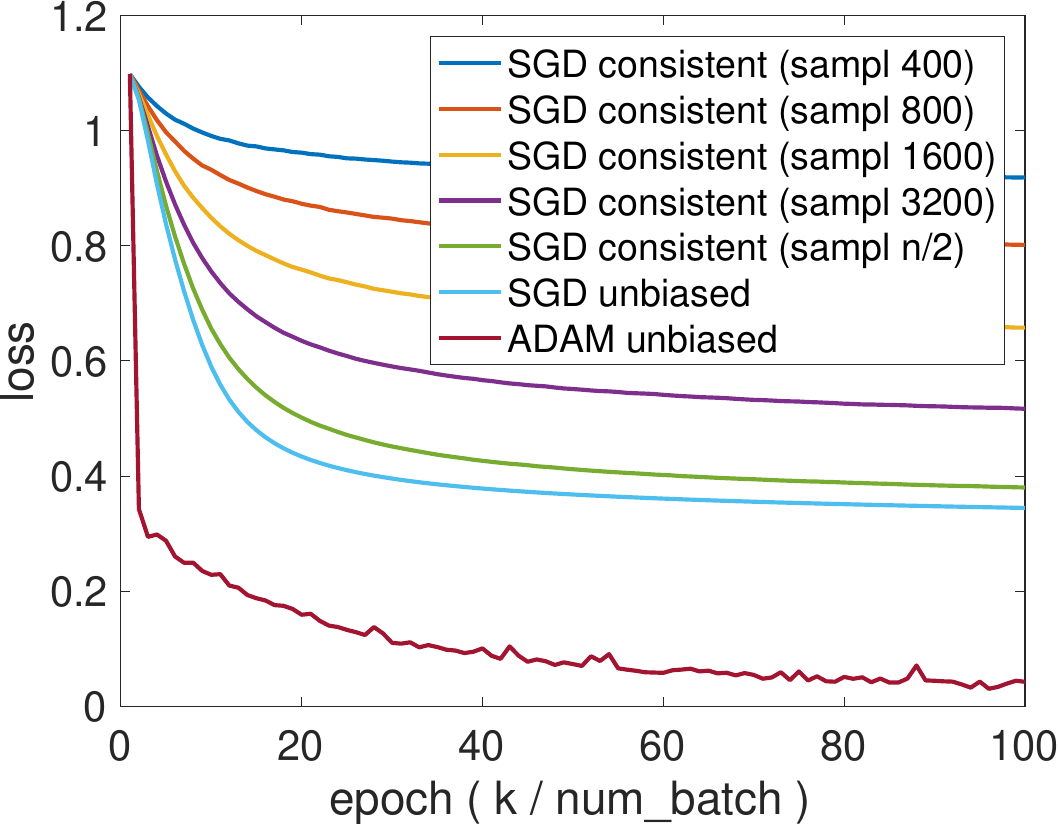}}
  \caption{Convergence history for 2-layer GCN, under different training algorithms.}
  \label{fig:convg.2layer}
\end{figure}

On the other hand, compared with 1-layer GCN, 2-layer GCN yields substantially higher test accuracy for the data set Mixture, better accuracy for Cora, and very similar accuracy for Pubmed. Within each data set, the performances of different training algorithms are on par. In particular, a small sample size (e.g., 400) suffices for achieving results comparable to the state of the art (cf. \citet{Chen2018}).

\subsection{Probability Convergence}\label{sec:exp.prob.convergence}
The nature of a consistent estimator necessitates a characterization of the speed of probability convergence for building further results, such as the ones in this paper. The speed, however, depends on the neural network architecture and it is out of the scope of this work to quantify it for a particular use case. Nevertheless, for GCN we demonstrate empirical findings that agree with the exponential tail assumption~\eqref{eqn:exp.dec.prob}. In Figure~\ref{fig:assumption} (solid curves), we plot the tail probability as a function of the sample size $N$ at different levels of estimator error $\delta$, for the initial gradient step in 1-layer GCN. For each $N$, 10,000 random gradient estimates were simulated for estimating the probability. Because the probability is plotted in the logarithmic scale, the fact that the curves bend down indicates that the convergence may be faster than exponential.

\begin{figure}[ht]
  \centering
  \includegraphics[width=.4\linewidth]{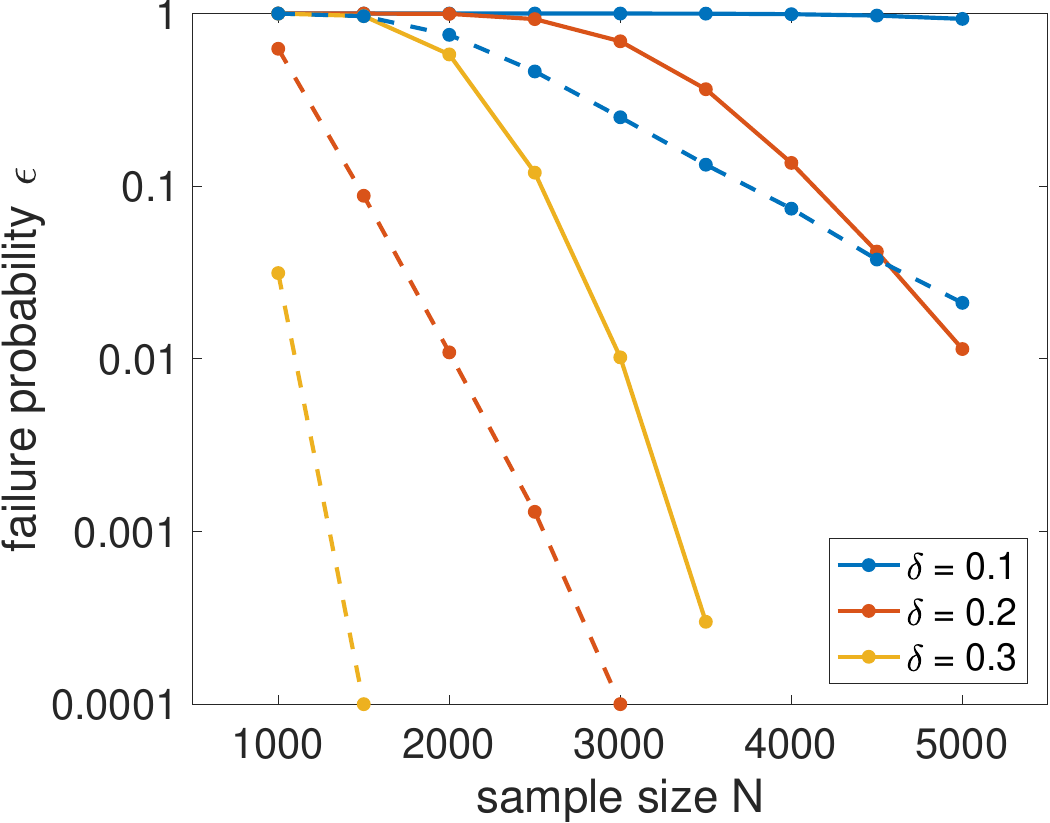}
  \caption{Failure probability versus sample size at different levels of estimator error $\delta$. Solid: 1-layer GCN; dashed: 2-layer GCN.}
  \label{fig:assumption}
\end{figure}

Additionally, the case of 2-layer GCN is demonstrated by the dashed curves in Figure~\ref{fig:assumption}. The curves tend to be straight lines in the limit, which indicates an exponential convergence.

\section{Concluding Remarks}
To the best of our knowledge, this is the first work that studies the convergence behavior of SGD with consistent gradient estimators, and one among few studies of first-order methods that employ biased~\citep{dAspremont2008,Schmidt2011} or noisy~\citep{Friedlander2012,Devolder2014,Ge2015} estimators. The motivation originates from learning with large graphs and the main message is that the convergence behavior is well-maintained with respect to the unbiased case. While we analyze the classic SGD update formula, this work points to several immediate extensions. One direction is the design of more efficient update formulas resembling the variance reduction technique for unbiased estimators~\citep{Johnson2013,Defazio2014,Bottou2016}. Another direction is the development of more computation- and memory-efficient training algorithms for neural networks for large graphs. GCN is only one member of a broad family of message passing neural networks~\citep{Gilmer2017} that suffer from the same limitation of neighborhood aggregation. Learning in these cases inevitably faces the costly computation of the sample gradient. Hence, a consistent estimator appears to be a promising alternative, whose construction is awaiting more innovative proposals.

We are grateful to an anonymous reviewer who inspired us of an interesting use case (other than GCN). \emph{Learning to rank} is a machine learning application that constructs ranking models for information retrieval systems. In representative methods such as RankNet~\citep{Burges2005} and subsequent improvements~\citep{Burges2007,Burges2010}, $s_i$ is the ranking function for document $i$ and the learning amounts to minimizing the loss
\[
\sum_{(i,j)} s_j-s_i+\log(1+e^{s_i-s_j}),
\]
where the summation ranges over all pairs of documents such that $i$ is ranked higher than $j$. The pairwise information may be organized as a graph and the loss function may be similarly generalized as a double integral analogous to~\eqref{eqn:f.Z}. Because of nonlinearity, Monte Carlo sampling of each integral will result in a biased but consistent estimator. Therefore, a new training algorithm is to sample $i$ and $j$ separately (forming a consistent gradient) and apply SGD. The theory developed in this work offers guarantees of training convergence.

\bibliography{reference}

\begin{thebibliography}{29}
\providecommand{\natexlab}[1]{#1}
\providecommand{\url}[1]{\texttt{#1}}
\expandafter\ifx\csname urlstyle\endcsname\relax
  \providecommand{\doi}[1]{doi: #1}\else
  \providecommand{\doi}{doi: \begingroup \urlstyle{rm}\Url}\fi

\bibitem[Bottou et~al.(2016)Bottou, Curtis, and Nocedal]{Bottou2016}
L\'{e}on Bottou, Frank~E. Curtis, and Jorge Nocedal.
\newblock Optimization methods for large-scale machine learning.
\newblock \url{arXiv:1606.04838v3}, 2016.

\bibitem[Bruna et~al.(2014)Bruna, Zaremba, Szlam, and LeCun]{Bruna2014}
Joan Bruna, Wojciech Zaremba, Arthur Szlam, and Yann LeCun.
\newblock Spectral networks and locally connected networks on graphs.
\newblock In \emph{ICLR}, 2014.

\bibitem[Burges et~al.(2005)Burges, Shaked, Renshaw, Lazier, Deeds, Hamilton,
  and Hullender]{Burges2005}
Chris Burges, Tal Shaked, Erin Renshaw, Ari Lazier, Matt Deeds, Nicole
  Hamilton, and Greg Hullender.
\newblock Learning to rank using gradient descent.
\newblock In \emph{ICML}, 2005.

\bibitem[Burges et~al.(2007)Burges, Ragno, and Le]{Burges2007}
Christopher~J. Burges, Robert Ragno, and Quoc~V. Le.
\newblock Learning to rank with nonsmooth cost functions.
\newblock In \emph{NIPS}, 2007.

\bibitem[Burges(2010)]{Burges2010}
Christopher~J.C. Burges.
\newblock From {RankNet} to {LambdaRank} to {LambdaMART}: An overview.
\newblock Technical Report MSR-TR-2010-82, Microsoft Research, 2010.

\bibitem[Chen et~al.(2018)Chen, Ma, and Xiao]{Chen2018}
Jie Chen, Tengfei Ma, and Cao Xiao.
\newblock {FastGCN}: Fast learning with graph convolutional networks via
  importance sampling.
\newblock In \emph{ICLR}, 2018.

\bibitem[{d'Aspremont}(2008)]{dAspremont2008}
Alexandre {d'Aspremont}.
\newblock Smooth optimization with approximate gradient.
\newblock \emph{SIAM J. Optim.}, 19\penalty0 (3):\penalty0 1171--1183, 2008.

\bibitem[Defazio et~al.(2014)Defazio, Bach, and Lacoste-Julien]{Defazio2014}
Aaron Defazio, Francis Bach, and Simon Lacoste-Julien.
\newblock {SAGA}: A fast incremental gradient method with support for
  non-strongly convex composite objectives.
\newblock In \emph{NIPS}, 2014.

\bibitem[Defferrard et~al.(2016)Defferrard, Bresson, and
  Vandergheynst]{Defferrard2016}
Micha\"{e}l Defferrard, Xavier Bresson, and Pierre Vandergheynst.
\newblock Convolutional neural networks on graphs with fast localized spectral
  filtering.
\newblock In \emph{NIPS}, 2016.

\bibitem[Devolder et~al.(2014)Devolder, Glineur, and Nesterov]{Devolder2014}
Olivier Devolder, Fran\c{c}ois Glineur, and Yurii Nesterov.
\newblock First-order methods of smooth convex optimization with inexact
  oracle.
\newblock \emph{Mathematical Programming}, 146\penalty0 (1--2):\penalty0
  37--75, 2014.

\bibitem[Friedlander \& Schmidt(2012)Friedlander and Schmidt]{Friedlander2012}
Michael~P. Friedlander and Mark Schmidt.
\newblock Hybrid deterministic-stochastic methods for data fitting.
\newblock \emph{SIAM J. Sci. Comput.}, 34\penalty0 (3):\penalty0 A1380--A1405,
  2012.

\bibitem[Ge et~al.(2015)Ge, Huang, Jin, and Yuan]{Ge2015}
Rong Ge, Furong Huang, Chi Jin, and Yang Yuan.
\newblock Escaping from saddle points --- online stochastic gradient for tensor
  decomposition.
\newblock In \emph{COLT}, 2015.

\bibitem[Ghadimi \& Lan(2013)Ghadimi and Lan]{Ghadimi2013}
Saeed Ghadimi and Guanghui Lan.
\newblock Stochastic first- and zeroth-order methods for nonconvex stochastic
  programming.
\newblock \emph{SIAM J. Optim.}, 23\penalty0 (4):\penalty0 2341--2368, 2013.

\bibitem[Gilmer et~al.(2017)Gilmer, Schoenholz, Riley, Vinyals, and
  Dahl]{Gilmer2017}
Justin Gilmer, Samuel~S. Schoenholz, Patrick~F. Riley, Oriol Vinyals, and
  George~E. Dahl.
\newblock Neural message passing for quantum chemistry.
\newblock In \emph{ICML}, 2017.

\bibitem[Glorot \& Bengio(2010)Glorot and Bengio]{Glorot2010}
Xavier Glorot and Yoshua Bengio.
\newblock Understanding the difficulty of training deep feedforward neural
  networks.
\newblock In \emph{AISTATS}, 2010.

\bibitem[Hamilton et~al.(2017)Hamilton, Ying, and Leskovec]{Hamilton2017}
William~L. Hamilton, Rex Ying, and Jure Leskovec.
\newblock Inductive representation learning on large graphs.
\newblock In \emph{NIPS}, 2017.

\bibitem[{Homem-de-Mello}(2008)]{de-Mello2008}
Tito {Homem-de-Mello}.
\newblock On rates of convergence for stochastic optimization problems under
  non–independent and identically distributed sampling.
\newblock \emph{SIAM J. Optim.}, 19\penalty0 (2):\penalty0 524--551, 2008.

\bibitem[Johnson \& Zhang(2013)Johnson and Zhang]{Johnson2013}
Rie Johnson and Tong Zhang.
\newblock Accelerating stochastic gradient descent using predictive variance
  reduction.
\newblock In \emph{NIPS}, 2013.

\bibitem[Kingma \& Ba(2015)Kingma and Ba]{Kingma2015}
Diederik~P. Kingma and Jimmy Ba.
\newblock Adam: A method for stochastic optimization.
\newblock In \emph{ICLR}, 2015.

\bibitem[Kipf \& Welling(2017)Kipf and Welling]{Kipf2017}
Thomas~N. Kipf and Max Welling.
\newblock Semi-supervised classification with graph convolutional networks.
\newblock In \emph{ICLR}, 2017.

\bibitem[Lacoste-Julien et~al.(2012)Lacoste-Julien, Schmidt, and
  Bach]{Lacoste-Julien2012}
Simon Lacoste-Julien, Mark Schmidt, and Francis Bach.
\newblock A simpler approach to obtaining an {$O(1/t)$} convergence rate for
  the projected stochastic subgradient method.
\newblock \url{arXiv:1212.2002v2}, 2012.

\bibitem[Li et~al.(2016)Li, Tarlow, Brockschmidt, and Zemel]{Li2016}
Yujia Li, Daniel Tarlow, Marc Brockschmidt, and Richard Zemel.
\newblock Gated graph sequence neural networks.
\newblock In \emph{ICLR}, 2016.

\bibitem[Liu(2015)]{Liu2015}
Ji~Liu.
\newblock {CSC} 576: Stochastic gradient ``descent'' algorithm.
\newblock \url{https://www.cs.rochester.edu/u/jliu/CSC-576/class-note-10.pdf},
  2015.

\bibitem[Nemirovski et~al.(2009)Nemirovski, Juditsky, Lan, and
  Shapiro]{Nemirovski2009}
A.~Nemirovski, A.~Juditsky, G.~Lan, and A.~Shapiro.
\newblock Robust stochastic approximation approach to stochastic programming.
\newblock \emph{SIAM J. Optim.}, 19\penalty0 (4):\penalty0 1574--1609, 2009.

\bibitem[Reddi et~al.(2016)Reddi, Hefny, Sra, Poczos, and Smola]{Reddi2016}
Sashank~J. Reddi, Ahmed Hefny, Suvrit Sra, Barnabas Poczos, and Alex Smola.
\newblock Stochastic variance reduction for nonconvex optimization.
\newblock In \emph{ICML}, 2016.

\bibitem[Robbins \& Monro(1951)Robbins and Monro]{Robbins1951}
Herbert Robbins and Sutton Monro.
\newblock A stochastic approximation method.
\newblock \emph{Ann. Math. Statist.}, 22\penalty0 (3):\penalty0 400--407, 1951.

\bibitem[Schmidt et~al.(2011)Schmidt, Roux, and Bach]{Schmidt2011}
Mark Schmidt, Nicolas~L. Roux, and Francis~R. Bach.
\newblock Convergence rates of inexact proximal-gradient methods for convex
  optimization.
\newblock In \emph{NIPS}, 2011.

\bibitem[Shalev-Shwartz \& Ben-David(2014)Shalev-Shwartz and
  Ben-David]{Shalev-Shwartz2014}
Shai Shalev-Shwartz and Shai Ben-David.
\newblock \emph{Understanding Machine Learning: From Theory to Algorithms}.
\newblock Cambridge University Press, 2014.

\bibitem[Veli\u{c}kovi\'{c} et~al.(2018)Veli\u{c}kovi\'{c}, Cucurull, Casanova,
  Romero, Li\`{o}, and Bengio]{Velickovic2018}
Petar Veli\u{c}kovi\'{c}, Guillem Cucurull, Arantxa Casanova, Adriana Romero,
  Pietro Li\`{o}, and Yoshua Bengio.
\newblock Graph attention networks.
\newblock In \emph{ICLR}, 2018.

\end{thebibliography}
\bibliographystyle{iclr2020_conference}

\appendix

\section{Proofs}

\subsection{Lemmas}
Here are a few lemmas needed for the proofs in subsequent subsections.

\begin{lemma}\label{lem:contraction}
  Projection is nonexpanding, i.e.,
  \[
  \|\Pi_S(w)-\Pi_S(u)\| \le \|w-u\|, \quad\forall\,w,u\in\real^d.
  \]
\end{lemma}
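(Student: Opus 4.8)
The plan is to rely on the standard first-order characterization of the Euclidean projection onto a convex set, the variational-inequality (obtuse-angle) property: for any $w\in\real^d$, the point $p=\Pi_S(w)$ is characterized by $p\in S$ and
\[
\langle w-p,\,x-p\rangle \le 0 \qquad \text{for all } x\in S.
\]
First I would derive this property. Fix $x\in S$; by convexity of $S$, the segment $p+t(x-p)$ lies in $S$ for every $t\in[0,1]$. Since $p$ minimizes $u\mapsto\|w-u\|^2$ over $S$, the scalar function $\phi(t)=\|w-p-t(x-p)\|^2$ attains its minimum over $[0,1]$ at the left endpoint $t=0$, so $\phi'(0)\ge 0$. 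Computing $\phi'(0)=-2\langle w-p,\,x-p\rangle$ yields the claimed inequality.

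Next I would apply this characterization twice. Write $p=\Pi_S(w)$ and $q=\Pi_S(u)$. Using the inequality for $w$ with the test point $x=q\in S$, and for $u$ with the test point $x=p\in S$, gives
\[
\langle w-p,\,q-p\rangle \le 0, \qquad \langle u-q,\,p-q\rangle \le 0.
\]
Adding these two inequalities and rearranging produces
\[
\|p-q\|^2 \le \langle w-u,\,p-q\rangle.
\]
By the Cauchy–Schwarz inequality the right-hand side is at most $\|w-u\|\,\|p-q\|$, so $\|p-q\|^2\le\|w-u\|\,\|p-q\|$; dividing through by $\|p-q\|$ (the inequality being trivial when $p=q$) gives $\|p-q\|\le\|w-u\|$, which is exactly the claim.

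I do not anticipate a serious obstacle. The only point that needs a little care is the derivation of the variational inequality, specifically that $\phi'(0)\ge 0$ is the correct optimality condition: this holds because $t=0$ is the left endpoint of $[0,1]$ and $\phi$ is differentiable. Implicitly I am also using that $\Pi_S$ is well defined, i.e., that the minimizer exists and is unique; existence follows from $S$ being closed with the objective coercive, and uniqueness from strict convexity of $\|w-\cdot\|^2$, but since the projection operator is already used throughout the paper I would simply take its well-definedness as given.
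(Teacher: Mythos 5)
Your proposal is correct and follows essentially the same route as the paper: apply the projection's variational inequality twice (with the other projected point as test point), sum, and finish with Cauchy--Schwarz. The only difference is that you also derive the variational inequality itself and note the trivial $p=q$ case, which the paper leaves implicit.
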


\begin{proof}
  Let $w'=\Pi_S(w)$ and $u'=\Pi_S(u)$. By the convexity of $S$, we have
  \[
  \langle w-w', u'-w' \rangle \le 0 \quad\text{and}\quad
  \langle u-u', w'-u' \rangle \le 0.
  \]
  Summing these two inequalities, we obtain $\langle w-u, w'-u'\rangle \ge \langle w'-u', w'-u'\rangle$. Then, by Cauchy--Schwarz,
  \[
  \|w'-u'\|^2 \le \langle w-u, w'-u'\rangle \le \|w-u\|\|w'-u'\|,
  \]
  which concludes the proof.
\end{proof}

\begin{lemma}\label{lem:strongly.convex}
  If $f$ is $l$-strongly convex, then
  \[
  \langle h_u, u-w^* \rangle \ge l\|u-w^*\|^2,
  \quad\forall\,u\in\real^d \text{ and } h_u\in\partial f(u).
  \]
\end{lemma}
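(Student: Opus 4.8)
The plan is to apply the strong convexity inequality of Definition~\ref{def:strongly.convex} twice, with the roles of the two points swapped, and then combine the two resulting bounds. The key external fact I will use is that $w^*$ is the global minimum lying in the interior of $S$, so that $0\in\partial f(w^*)$; this is exactly the assumption recorded in the "Setting and Notations" section.

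First I would instantiate Definition~\ref{def:strongly.convex} with the point $w^*$ playing the role of $u$ and an arbitrary $u$ playing the role of $w$, using the subgradient $h_{w^*}=0\in\partial f(w^*)$. This yields
\[
f(u)-f(w^*)\ \ge\ \langle 0,\,u-w^*\rangle+\frac{l}{2}\|u-w^*\|^2\ =\ \frac{l}{2}\|u-w^*\|^2 .
\]
Next I would instantiate the same definition the other way around: with $u$ playing the role of $u$ (and subgradient $h_u\in\partial f(u)$) and $w^*$ playing the role of $w$, giving
\[
f(w^*)-f(u)\ \ge\ \langle h_u,\,w^*-u\rangle+\frac{l}{2}\|w^*-u\|^2 ,
\]
which rearranges to $f(u)-f(w^*)\le \langle h_u,\,u-w^*\rangle-\frac{l}{2}\|u-w^*\|^2$.

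Finally I would chain the two displays: the quantity $f(u)-f(w^*)$ is bounded below by $\frac{l}{2}\|u-w^*\|^2$ and above by $\langle h_u,\,u-w^*\rangle-\frac{l}{2}\|u-w^*\|^2$, so
\[
\frac{l}{2}\|u-w^*\|^2\ \le\ \langle h_u,\,u-w^*\rangle-\frac{l}{2}\|u-w^*\|^2 ,
\]
and moving the term over gives $\langle h_u,\,u-w^*\rangle\ge l\|u-w^*\|^2$, as claimed. There is no real obstacle here; the only point requiring a little care is the justification that $0\in\partial f(w^*)$ (from $w^*$ being an interior global minimizer) so that the first inequality goes through, and making sure the inequality from Definition~\ref{def:strongly.convex} is quantified over \emph{all} $w,u$ so both instantiations are legitimate.
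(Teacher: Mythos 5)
Your proof is correct and is essentially the paper's own argument: both apply Definition~\ref{def:strongly.convex} once with the subgradient $h_u$ at $u$ and once at $w^*$ with the subgradient $0\in\partial f(w^*)$ (justified by $w^*$ being an interior global minimizer), and combine the two inequalities--your chaining through $f(u)-f(w^*)$ is the same manipulation as the paper's summing of the two displays. If anything, you are slightly more explicit than the paper, which only signals the $0\in\partial f(w^*)$ step with a phantom term.
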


\begin{proof}
  Applying Definition~\ref{def:strongly.convex} twice
  \begin{align*}
  f(w^*)-f(u) &\ge \langle h_u, w^*-u \rangle + \frac{l}{2}\|w^*-u\|^2\\
  f(u)-f(w^*) &\ge \phantom{\langle h_u, w^*-u\rangle} + \frac{l}{2}\|u-w^*\|^2,
  \end{align*}
  and summing these two inequalities, we conclude the proof.
\end{proof}

\begin{lemma}\label{lem:expand.wk1}
  For any $w\in S$,
  \[
  \|w_{k+1}-w\|^2 \le \|w_k-w\|^2 - 2\gamma_k\langle g_k, w_k-w\rangle
  + \gamma_k^2\|g_k\|^2.
  \]
\end{lemma}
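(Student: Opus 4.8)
The plan is to start from the SGD update rule $w_{k+1}=\Pi_S(w_k-\gamma_k g_k)$ and exploit the fact that $w\in S$ is a fixed point of the projection, i.e. $w=\Pi_S(w)$. Then Lemma~\ref{lem:contraction} (nonexpansiveness of $\Pi_S$) gives immediately
\[
\|w_{k+1}-w\|^2 = \|\Pi_S(w_k-\gamma_k g_k)-\Pi_S(w)\|^2 \le \|(w_k-\gamma_k g_k)-w\|^2.
\]
So the projection is disposed of in one line, and the remaining work is purely algebraic on the right-hand side.

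Next I would expand the squared norm $\|(w_k-w)-\gamma_k g_k\|^2$ by bilinearity of the inner product:
\[
\|(w_k-w)-\gamma_k g_k\|^2 = \|w_k-w\|^2 - 2\gamma_k\langle g_k, w_k-w\rangle + \gamma_k^2\|g_k\|^2.
\]
Combining this with the inequality from the previous step yields exactly the claimed bound. For the unconstrained case $S=\real^d$ the projection is the identity and the first step is trivially an equality, so the same expansion applies verbatim.

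There is essentially no obstacle here: the lemma is a standard one-step recursion and the only ingredients are the nonexpansiveness of the projection (already proved as Lemma~\ref{lem:contraction}) and elementary expansion of a norm. The one point worth stating carefully is that $w$ must lie in $S$ so that $\Pi_S(w)=w$; this is why the hypothesis reads ``for any $w\in S$.'' No convexity, smoothness, or assumptions on $g_k$ (unbiasedness, consistency, or bounded second moment) are used — the lemma is a purely geometric/algebraic fact about the iteration, which is precisely why it serves as the common starting point for all the convergence theorems that follow.
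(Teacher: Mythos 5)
Your proof is correct and follows essentially the same route as the paper: apply the nonexpansiveness of $\Pi_S$ (Lemma~\ref{lem:contraction}) together with $\Pi_S(w)=w$ for $w\in S$, then expand the squared norm. Your explicit remark that $w\in S$ is needed so that $w$ is a fixed point of the projection is a point the paper leaves implicit, but the argument is the same.
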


\begin{proof}
  It is straightforward to verify that
  \begin{align*}
    \|w_{k+1}-w\|^2 & = \|\Pi_S(w_k-\gamma_kg_k)-w\|^2 \\
    & \le \|w_k-\gamma_kg_k-w\|^2 \\
    & = \|w_k-w\|^2 - 2\gamma_k\langle g_k, w_k-w\rangle +\gamma_k^2\|g_k\|^2,
  \end{align*}
  where the inequality results from Lemma~\ref{lem:contraction}.
\end{proof}

\begin{lemma}\label{lem:gk.bound}
  If $\|g_k-h_k\|\le\delta\|h_k\|$, then
  \[
  (1-\delta)\|h_k\| \le \|g_k\| \le (1+\delta)\|h_k\|,
  \]
  and
  \[
  -\frac{\delta}{2}(\|h_k\|^2+\|w_k-w^*\|^2) \le
  \langle g_k-h_k,w_k-w^* \rangle 
  \le \frac{\delta}{2}(\|h_k\|^2+\|w_k-w^*\|^2).
  \]
\end{lemma}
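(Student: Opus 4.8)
The plan is to treat the two chains of inequalities separately, since both follow from elementary manipulations of the hypothesis $\|g_k-h_k\|\le\delta\|h_k\|$. For the first chain, I would write $g_k = h_k + (g_k - h_k)$ and apply the triangle inequality in both directions: $\|g_k\| \le \|h_k\| + \|g_k-h_k\| \le (1+\delta)\|h_k\|$ gives the upper bound, and the reverse triangle inequality $\|g_k\| \ge \|h_k\| - \|g_k-h_k\| \ge (1-\delta)\|h_k\|$ gives the lower bound. This is immediate and needs no further input.

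For the second chain, the idea is to bound the inner product $\langle g_k-h_k, w_k-w^*\rangle$ by Cauchy--Schwarz and then use the hypothesis together with the AM--GM inequality. Specifically, $|\langle g_k-h_k, w_k-w^*\rangle| \le \|g_k-h_k\|\,\|w_k-w^*\| \le \delta\|h_k\|\,\|w_k-w^*\|$. Then I would apply the elementary bound $ab \le \tfrac12(a^2+b^2)$ with $a=\|h_k\|$ and $b=\|w_k-w^*\|$ to obtain $\delta\|h_k\|\,\|w_k-w^*\| \le \tfrac{\delta}{2}(\|h_k\|^2 + \|w_k-w^*\|^2)$. Combining, $|\langle g_k-h_k, w_k-w^*\rangle| \le \tfrac{\delta}{2}(\|h_k\|^2 + \|w_k-w^*\|^2)$, which unpacks to exactly the stated two-sided inequality.

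I do not anticipate any real obstacle here: both parts are direct consequences of the triangle inequality, Cauchy--Schwarz, and AM--GM, with no need for convexity, smoothness, or any structural assumption on $f$. The only thing to be slightly careful about is the direction of the reverse triangle inequality in the lower bound $\|g_k\|\ge(1-\delta)\|h_k\|$ (which is vacuous but still correct when $\delta\ge1$), and making sure the AM--GM step is applied to the product of norms rather than to the signed inner product directly. The whole argument is a few lines.
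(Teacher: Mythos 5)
Your proposal is correct and follows essentially the same route as the paper's proof: the triangle inequality (and its reverse) for the norm bounds, then Cauchy--Schwarz combined with $ab\le\tfrac12(a^2+b^2)$ for the inner-product bounds. No gaps to report.
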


\begin{proof}
  For the first displayed inequality, it is straightforward to verify the upper bound
  \[
  \|g_k\| \le \|h_k\| + \|g_k-h_k\| \le (1+\delta)\|h_k\|,
  \]
  and similarly the lower bound.
  For the second displayed inequality, Cauchy--Schwarz leads to the upper bound
  \[
  \langle g_k-h_k,w_k-w^* \rangle \le \|g_k-h_k\|\cdot\|w_k-w^*\| 
  \le \delta\|h_k\|\cdot\|w_k-w^*\| \le
  \frac{\delta}{2}(\|h_k\|^2+\|w_k-w^*\|^2).
  \]
  The lower bound is similarly proved.
\end{proof}

\subsection{Proof of Theorem~\ref{thm:consistent}}
By the weak law of large numbers, $B_t(v)\to B(v)$ in probability for any $v$, where the probability space is with respect to $u$. Then, $q(B_t(v))\to q(B(v))$ in probability by the continuous mapping theorem. Applying the law of large numbers again, now for $v$ on a separate probability space different from that of $u$, we conclude that $G_{st}\to G$ in probability.

\subsection{Proof of Theorem~\ref{thm:strongly.convex.1.high.prob}, Inequality~\eqref{eqn:strongly.convex.1.1.high.prob}}
Applying Lemma~\ref{lem:expand.wk1} with $w=w^*$, we have
\[
\|w_{k+1}-w^*\|^2 \le \|w_k-w^*\|^2
- 2\gamma_k\langle h_k,w_k-w^*\rangle 
- 2\gamma_k\langle g_k-h_k,w_k-w^*\rangle
+ \gamma_k^2\|g_k\|^2.
\]
Applying Lemma~\ref{lem:strongly.convex} with $u=w_k$ and Lemma~\ref{lem:gk.bound}, we have
\begin{align}
  \|w_{k+1}-w^*\|^2 &\le \|w_k-w^*\|^2
  - 2\gamma_kl\|w_k-w^*\|^2 
  + \gamma_k\delta(\|h_k\|^2+\|w_k-w^*\|^2)
  + \gamma_k^2(1+\delta)^2\|h_k\|^2 \nonumber \\
  &= (1 - 2\gamma_kl + \gamma_k\delta) \|w_k-w^*\|^2
  + (\gamma_k\delta + \gamma_k^2(1+\delta)^2) G^2. \label{eqn:last}
\end{align}
In what follows, we show by induction on $k$ that
\[
\|w_k-w^*\|^2 \le \left[\frac{(1+\delta)^2}{(l-\delta)^2k}
  + \frac{\delta}{l-\delta}\right]G^2.
\]
Then, setting $k=T$ we can conclude the proof.

First, in the base case when $k=1$, by assumption we have
\[
\|w_k-w^*\|^2 \le \frac{G^2}{l^2} \le \frac{(1+\delta)^2}{(l-\delta)^2}G^2
\le \left[\frac{(1+\delta)^2}{(l-\delta)^2} + \frac{\delta}{l-\delta}\right]G^2.
\]
Then, in the induction step, taking $\gamma_k=[(l-\delta)k]^{-1}$ as defined in the theorem on~\eqref{eqn:last} and using the induction hypothesis, we have
\begin{align*}
\|w_{k+1}-w^*\|^2 &\le \frac{lk-\delta k-2l+\delta}{(l-\delta)k}
\left[\frac{(1+\delta)^2}{(l-\delta)^2k}+\frac{\delta}{l-\delta}\right]G^2
+\left[\frac{\delta}{(l-\delta)k}+\frac{(1+\delta)^2}{(l-\delta)^2k^2}\right]G^2\\
&=\frac{(lk-\delta k-2l+\delta)(1+\delta)^2}{(l-\delta)^3k^2}G^2
+\frac{(lk-\delta k-2l+\delta)\delta}{(l-\delta)^2k}G^2
+\frac{\delta}{(l-\delta)k}G^2+\frac{(1+\delta)^2}{(l-\delta)^2k^2}G^2\\
&\le\frac{(k-2)(1+\delta)^2}{(l-\delta)^2k^2}G^2
+\frac{\delta(k-1)}{(l-\delta)k}G^2
+\frac{\delta}{(l-\delta)k}G^2+\frac{(1+\delta)^2}{(l-\delta)^2k^2}G^2.
\end{align*}
For the right-hand side, combine the first and the fourth term, and the second and the third term, we obtain
\[
\|w_{k+1}-w^*\|^2 \le \frac{(k-1)(1+\delta)^2}{(l-\delta)^2k^2}G^2
+\frac{\delta}{(l-\delta)}G^2
\le \frac{(1+\delta)^2}{(l-\delta)^2(k+1)}G^2
+\frac{\delta}{(l-\delta)}G^2,
\]
which completes the induction step.

\subsection{Proof of Theorem~\ref{thm:strongly.convex.1.high.prob}, Inequality~\eqref{eqn:strongly.convex.1.2.high.prob}}
Applying Lemma~\ref{lem:expand.wk1} with $w=w^*$, we have
\[
\|w_{k+1}-w^*\|^2 \le \|w_k-w^*\|^2
- 2\gamma_k\langle h_k,w_k-w^*\rangle 
- 2\gamma_k\langle g_k-h_k,w_k-w^*\rangle
+ \gamma_k^2\|g_k\|^2.
\]
Applying the definition of strong convexity and Lemma~\ref{lem:gk.bound}, we have
\begin{align*}
  \|w_{k+1}-w^*\|^2 &\le \|w_k-w^*\|^2 
  - 2\gamma_k[f(w_k)-f(w^*)] -\gamma_kl\|w_k-w^*\|^2\\
  &\qquad\qquad + \gamma_k\delta(\|h_k\|^2+\|w_k-w^*\|^2)
  + \gamma_k^2(1+\delta)^2\|h_k\|^2 \\
  &= - 2\gamma_k[f(w_k)-f(w^*)]
  + (1-\gamma_kl+\gamma_k\delta) \|w_k-w^*\|^2
  + (\gamma_k\delta+\gamma_k^2(1+\delta)^2) G^2.
\end{align*}
Rearranging, we have
\[
2[f(w_k)-f(w^*)] \le
(\gamma_k^{-1}-l+\delta) \|w_k-w^*\|^2
- \gamma_k^{-1} \|w_{k+1}-w^*\|^2 
+ (\delta+\gamma_k(1+\delta)^2) G^2.
\]
Noting that the step size $\gamma_k=[(l-\delta)k]^{-1}$, we have
\[
2[f(w_k)-f(w^*)] \le (l-\delta)(k-1)\|w_k-w^*\|^2
- (l-\delta)k\|w_{k+1}-w^*\|^2
+ G^2\left[\delta+\frac{(1+\delta)^2}{(l-\delta)k}\right].
\]
Summing from $k=1$ to $k=T$ and multiplying by $1/(2T)$, we have
\[
\frac{1}{T}\sum_{k=1}^T[f(w_k)-f(w^*)] \le
-\frac{l-\delta}{2}\|w_{T+1}-w^*\|^2
+\frac{G^2}{2T}\left[\delta T
  +\frac{(1+\delta)^2}{l-\delta}\sum_{k=1}^T\frac{1}{k}\right].
\]
By the convexity of $f$ and using the bound $\sum_{k=1}^T1/k\le 1+\log T$, and noting that $\delta=\rho/T$, we have
\[
f(\overline{w}_T)-f(w^*) \le
-\frac{l-\delta}{2}\|w_{T+1}-w^*\|^2
+\frac{G^2}{2T}\left[\rho+\frac{(1+\rho/T)^2}{l-\rho/T}(1+\log T)\right].
\]
Relaxing the right-hand side through omitting the negative term, we thus conclude the proof.

\subsection{Proof of Theorem~\ref{thm:strongly.convex.2.high.prob}}
The $L$-smoothness property implies a second order condition for convex functions:
\[
f(w_k)-f(w^*) \le \frac{L}{2}\|w_k-w^*\|^2.
\]
Then, applying~\eqref{eqn:strongly.convex.1.1.high.prob} with $k=T$, we conclude the proof.

\subsection{Proof of Theorem~\ref{thm:strongly.convex.3.high.prob}}
We reuse~\eqref{eqn:last} in the proof of inequality~\eqref{eqn:strongly.convex.1.1.high.prob} in Theorem~\ref{thm:strongly.convex.1.high.prob}:
\[
\|w_{k+1}-w^*\|^2 \le
(1 - 2\gamma_kl + \gamma_k\delta) \|w_k-w^*\|^2
+ (\gamma_k\delta + \gamma_k^2(1+\delta)^2) G^2.
\]
Applying step size $\gamma_k=c$, we have
\[
\|w_{k+1}-w^*\|^2 \le
(1 - 2cl + c\delta) \|w_k-w^*\|^2
+ (c\delta + c^2(1+\delta)^2) G^2.
\]
Unrolling recursion with respect to $k$, we have
\[
\|w_{k+1}-w^*\|^2 \le
(1 - 2cl + c\delta)^k \|w_1-w^*\|^2
+ (c\delta + c^2(1+\delta)^2) \sum_{i=0}^{k-1} (1 - 2cl + c\delta)^i G^2.
\]
Because $0 < 1 - 2cl + c\delta < 1$ by assumption, we have
\[
\sum_{i=0}^{k-1} (1 - 2cl + c\delta)^i < \frac{1}{2cl-c\delta},
\]
which concludes the proof.

\subsection{Proof of Theorem~\ref{thm:convex.high.prob}}
Applying Lemma~\ref{lem:expand.wk1} with $w=w^*$, we have
\[
\|w_{k+1}-w^*\|^2 \le \|w_k-w^*\|^2
- 2\gamma_k\langle h_k,w_k-w^*\rangle
- 2\gamma_k\langle g_k-h_k,w_k-w^*\rangle
+ \gamma_k^2\|g_k\|^2.
\]
Applying a property of convex functions and Lemma~\ref{lem:gk.bound}, we have
\begin{align*}
  \|w_{k+1}-w^*\|^2 &\le \|w_k-w^*\|^2
  - 2\gamma_k[f(w_k)-f(w^*)] 
  + \gamma_k\delta(\|h_k\|^2+\|w_k-w^*\|^2)
  + \gamma_k^2(1+\delta)^2\|h_k\|^2 \\
  &= - 2\gamma_k[f(w_k)-f(w^*)]
  + (1+\gamma_k\delta) \|w_k-w^*\|^2
  + (\gamma_k\delta+\gamma_k^2(1+\delta)^2) G^2.
\end{align*}
Rearranging, we have
\[
2[f(w_k)-f(w^*)] \le 
(\gamma_k^{-1}+\delta) \|w_k-w^*\|^2
- \gamma_k^{-1}\|w_{k+1}-w^*\|^2
+ (\delta+\gamma_k(1+\delta)^2) G^2.
\]
Summing from $k=1$ to $k=T$, relaxing the negative term $-\gamma_T^{-1}\|w_{T+1}-w^*\|^2$ on the right-hand side, and multiplying by $1/(2T)$, we have
\begin{multline*}
  \frac{1}{T}\sum_{k=1}^T[f(w_k)-f(w^*)] \le
  \frac{\gamma_1^{-1}+\delta}{2T}\|w_1-w^*\|^2\\
  +\sum_{k=2}^T\frac{\gamma_k^{-1}+\delta-\gamma_{k-1}^{-1}}{2T}\|w_k-w^*\|^2
  + \frac{G^2}{2T}\left[\delta T+(1+\delta)^2\sum_{k=1}^T\gamma_k\right].
\end{multline*}
Applying $\|w_k-w^*\|^2\le D^2$ for all $k$, we have
\[
\frac{1}{T}\sum_{k=1}^T[f(w_k)-f(w^*)] \le
\frac{(\gamma_T^{-1}+\delta T)D^2}{2T}
+\frac{G^2}{2T}\left[\delta T+(1+\delta)^2\sum_{k=1}^T\gamma_k\right].
\]
Noting that $\gamma_k=c/\sqrt{k}$ and $\delta=\rho/\sqrt{T}$, we have
\[
\frac{1}{T}\sum_{k=1}^T[f(w_k)-f(w^*)] \le
\frac{1}{2\sqrt{T}}\left(\frac{1}{c}+\rho\right)D^2
+\frac{G^2}{2T}\left[\rho\sqrt{T}+
  c\left(1+\frac{\rho}{\sqrt{T}}\right)^2\sum_{k=1}^T\frac{1}{\sqrt{k}}\right].
\]
By the convexity of $f$ and using the bound $\sum_{k=1}^T1/\sqrt{k}\le\sqrt{T+1}$, we have
\[
f(\overline{w}_T)-f(w^*) \le
\frac{1}{2\sqrt{T}}\left(\frac{1}{c}+\rho\right)D^2
+\frac{G^2}{2T}\left[\rho\sqrt{T}+
  c\left(1+\frac{\rho}{\sqrt{T}}\right)^2\sqrt{T+1}\right],
\]
which concludes the proof.

\subsection{Proof of Theorem~\ref{thm:nonconvex.2.high.prob}}
The $L$-smoothness property implies that
\[
f(w_{k+1}) \le f(w_k) + \langle \nabla f(w_k), w_{k+1}-w_k \rangle
+ \frac{L}{2}\|w_{k+1}-w_k\|^2.
\]
Noting that $w_{k+1}-w_k=-\gamma_kg_k$ (because $S=\real^d$) and applying Lemma~\ref{lem:gk.bound}, we have
\[
  f(w_{k+1}) \le f(w_k) -\gamma_k \langle h_k,g_k \rangle
  + \frac{L\gamma_k^2\|g_k\|^2}{2} 
  \le f(w_k) -\gamma_k(1-\delta)\|h_k\|^2
  + \frac{L\gamma_k^2(1+\delta)^2\|h_k\|^2}{2}.
\]
Rearranging, we have
\[
\|\nabla f(w_k)\|^2 \le [\gamma_k(1-\delta)]^{-1}\left[f(w_k) - f(w_{k+1})\right]
+ \frac{L\gamma_k(1+\delta)^2G^2}{2(1-\delta)}.
\]
Summing from $k=1$ to $k=T$, multiplying by $1/T$, and noting that $\gamma_k$ is constant, we have
\[
\min_k\|\nabla f(w_k)\|^2
\le \frac{[\gamma_1(1-\delta)]^{-1}}{T}\left[f(w_1)-f(w_{T+1})\right]
+ \frac{L\gamma_1(1+\delta)^2G^2}{2(1-\delta)}.
\]
Because $f(w_{T+1})\ge f(w^*)$ and $\gamma_1=D_f/[(1+\delta)G\sqrt{T}]$, we have
\[
\min_k\|\nabla f(w_k)\|^2
\le \frac{[\gamma_1(1-\delta)]^{-1}}{T}[f(w_1)-f(w^*)]
+ \frac{L\gamma_1(1+\delta)^2G^2}{2(1-\delta)}
= \frac{(1+\delta)LGD_f}{(1-\delta)\sqrt{T}},
\]
which concludes the proof.

\subsection{Proof of Theorem~\ref{thm:strongly.convex.4.high.prob}}
Theorem 2.2 of~\citet{Friedlander2012} states that when the gradient error
\begin{equation}\label{eqn:grad.err}
\|g_k-h_k\|<\delta_k \quad\text{for all}\quad k\ge1,
\end{equation}
inequality~\eqref{eqn:strongly.convex.4.high.prob} holds. It remains to show that the probability that~\eqref{eqn:grad.err} happens is at least $1-\epsilon$.

The assumption on the sample size $N_k$ means that
\[
C_ke^{-N_k\tau(\delta_k/\|h_k\|)} \le \epsilon/T.
\]
Then, substituting $\delta_k=\delta\|h_k\|$ into assumption~\eqref{eqn:exp.dec.prob} yields
\[
\Pr\Big(\|g_k-h_k\|\ge\delta_k \,\Big\vert\, g_1,\ldots,g_{k-1}\Big)
\le C_ke^{-N_k\tau(\delta_k/\|h_k\|)} \le \epsilon/T.
\]
Hence, the probability that~\eqref{eqn:grad.err} happens is
\[
\prod_{k=1}^T\Bigg[1-\Pr\Big(\|g_k-h_k\|\ge\delta_k \,\Big\vert\, g_1,\ldots,g_{k-1}\Big)\Bigg]
\ge \prod_{k=1}^T(1-\epsilon/T) \ge 1-\epsilon,
\]
which concludes the proof.

\section{Experiment Details}

\subsection{The ``Mixture'' Data Set}
The data set is a Gaussian mixture with $c=3$ components in $d=2$ dimensions. The components $\mathcal{N}(\mu_i,\sigma_i^2I)$ with $\mu_1=[-0.5,0]$, $\sigma_1=0.75$, $\mu_2=[0.5,0]$, $\sigma_2=0.5$, $\mu_3=[0,0.866]$, and $\sigma_3=0.25$ are equally weighted but significantly overlap with each other. Random connections are made between every pair of points. For points in the same component, the probability that they are connected is $p_{\text{intra}}=\texttt{1e-3}$; for points straddle across components, the probability is $p_{\text{inter}}=\texttt{2e-4}$. See Figure~\ref{fig:synthetic}(a) for an illustration of the Gaussian mixture and Figure~\ref{fig:synthetic}(b) for the graph adjacency matrix.

\begin{figure}[ht]
  \centering
  \subfigure[Overlapping Gaussians]{
    \includegraphics[width=.53\linewidth]{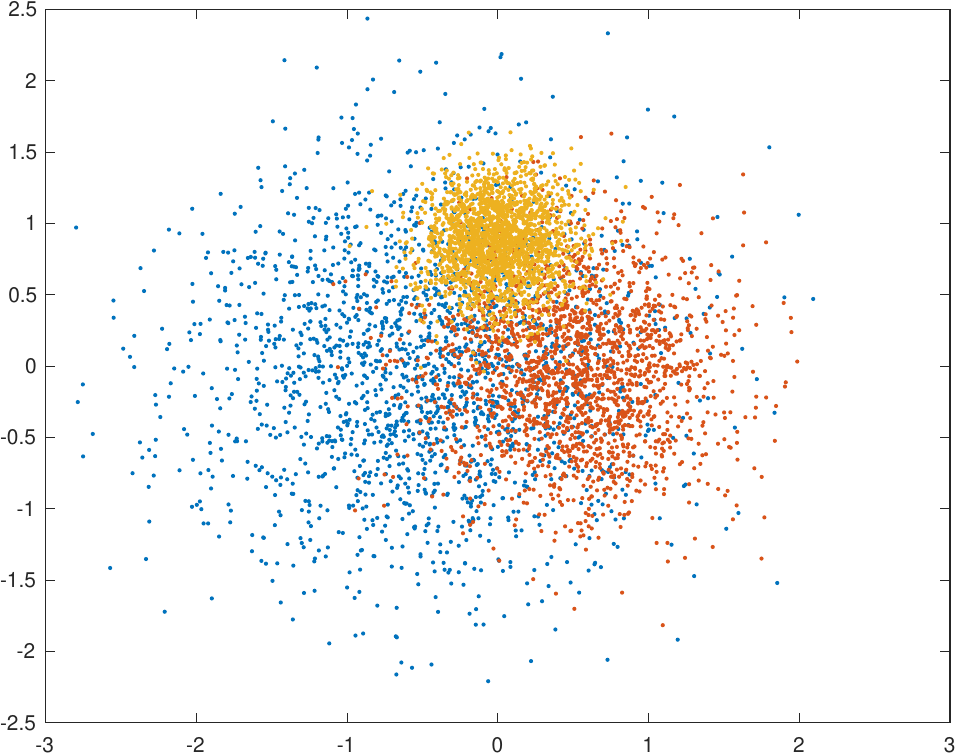}}
  \subfigure[Adjacency matrix]{
    \includegraphics[width=.43\linewidth]{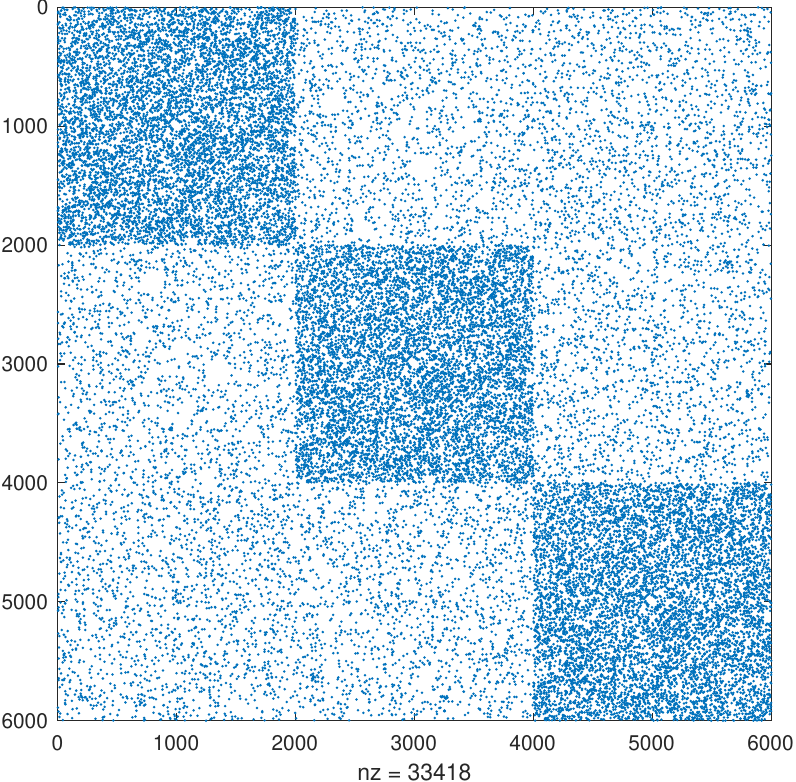}}
  \caption{The ``Mixture'' data set (input features and graph).}
  \label{fig:synthetic}
\end{figure}

\subsection{Summary of Data Sets}
See Table~\ref{tab:dataset} for a summary of the data sets used in this work.

\begin{table}[ht]
  \centering
  \caption{Data sets.}
  \label{tab:dataset}
  \begin{tabular}{cccc}
    \hline
    & Mixture & Cora & Pubmed \\
    \hline
    \# Nodes      & 6,000  & 2,708 & 19,717 \\
    \# Edges      & 16,709 & 5,429 & 44,338 \\
    \# Classes    & 3      & 7     & 3 \\
    \# Features   & 2      & 1,433 & 500 \\
    \# Training   & 2,400  & 1,208 & 18,217 \\
    \# Validation & 1,200  & 500   & 500 \\
    \# Test       & 2,400  & 1,000 & 1,000 \\
    \hline
  \end{tabular}
\end{table}

\subsection{(Hyper)Parameters}
See Table~\ref{tab:hyper.param} for the hyperparameters used in the experiments. For parameter initialization, we use the Glorot uniform initializer~\citep{Glorot2010}.

\begin{table}[ht]
  \centering
  \caption{Hyperparameters for different GCN architectures and training algorithms.}
  \label{tab:hyper.param}
  \subfigure[1-layer GCN]{
    \begin{tabular}{cccc}
      \hline
      & Mixture & Cora & Pubmed \\
      \hline
      Batch size         & 256  & 256  & 256 \\
      Regularization     & 0    & 0    & 0 \\
      SGD learning rate  & \texttt{1e+0} & \texttt{1e+3} & \texttt{1e+3} \\
      Adam learning rate & \texttt{1e-2} & \texttt{1e-1} & \texttt{1e-1} \\
      \hline
    \end{tabular}
  }
  \subfigure[2-layer GCN]{
    \begin{tabular}{cccc}
      \hline
      & Mixture & Cora & Pubmed \\
      \hline
      Batch size         & 256  & 256  & 256 \\
      Regularization     & 0    & 0    & 0 \\
      Hidden unit        & 16   & 16   & 16 \\
      SGD learning rate  & \texttt{1e+0} & \texttt{1e+2} & \texttt{1e+1} \\
      Adam learning rate & \texttt{1e-2} & \texttt{1e-1} & \texttt{1e-1} \\
      \hline
    \end{tabular}
  }
\end{table}

\subsection{Run Time}
See Table~\ref{tab:time} for the run time (per epoch). As expected, a smaller sample size is more computationally efficient. SGD with consistent gradients runs faster than the standard SGD and Adam, both of which admit approximately the same computational cost.

\begin{table}[ht]
  \centering
  \caption{Time per epoch in seconds.}
  \label{tab:time}
  \setlength{\tabcolsep}{4pt}
  \begin{tabular}{cccc}
    &\multicolumn{3}{c}{1-layer GCN} \\
    \hline
    & Mixture & Cora & Pubmed \\
    \hline
    SGD (400)     & 0.0035 & 0.0269 & 0.1991 \\
    SGD (800)     & 0.0018 & 0.0455 & 0.3554 \\
    SGD (1600)    & 0.0027 &   -    & 0.7129 \\
    SGD (3200)    &   -    &   -    & 1.1847 \\
    SGD unbiased  & 0.0044 & 0.0737 & 2.2425 \\
    Adam unbiased & 0.0049 & 0.0741 & 2.2313 \\
    \hline
  \end{tabular}\hspace{.1cm}
  \begin{tabular}{cccc}
    &\multicolumn{3}{c}{2-layer GCN} \\
    \hline
    & Mixture & Cora & Pubmed \\
    \hline
                  & 0.0103 & 0.0868 & 2.5014 \\
                  & 0.0103 & 0.0974 & 2.5684 \\
                  & 0.0142 &   -    & 3.2032 \\
                  &   -    &   -    & 3.8895 \\
                  & 0.0130 & 0.2031 & 7.9478 \\
                  & 0.0143 & 0.2080 & 7.9037 \\
    \hline
  \end{tabular}
\end{table}

\end{document}